\documentclass[10pt,twocolumn,letterpaper]{article}

\usepackage{cvpr}              %

\definecolor{cvprblue}{rgb}{0.21,0.49,0.74}
\usepackage[pagebackref,breaklinks,colorlinks,allcolors=cvprblue]{hyperref}

\def\paperID{18788} %
\def\confName{CVPR}
\def\confYear{2026}

\usepackage{booktabs}
\usepackage{color, colortbl, soul}
\usepackage{multirow}
\usepackage{algorithm}
\usepackage{graphicx}
\usepackage{adjustbox}
\usepackage{algpseudocode}
\usepackage{amsmath}
\usepackage{amsthm}
\usepackage{placeins}
\usepackage{lscape}
\usepackage{tablefootnote}
\usepackage{siunitx}

\definecolor{lightgray}{gray}{0.92} %
\definecolor{darkblue}{RGB}{0, 51, 153} %
\definecolor{highlight}{RGB}{255, 230, 204} %
\definecolor{lowbetter}{RGB}{120,180,255} %
\newcounter{nTheorems}

\newtheorem{theorem}[nTheorems]{Theorem}
\newtheorem{corollary}[nTheorems]{Corollary}

\newtheorem{proposition}[nTheorems]{Proposition}

\newcounter{nDefinitions}
\newtheorem{definition}[nDefinitions]{Definition}

\newtheorem{innercustomgeneric}{\customgenericname}
\providecommand{\customgenericname}{}
\newcommand{\newcustomtheorem}[2]{%
  \newenvironment{#1}[1]
  {%
   \renewcommand\customgenericname{#2}%
   \renewcommand\theinnercustomgeneric{##1}%
   \innercustomgeneric
  }
  {\endinnercustomgeneric}
}

\newcommand{\singularsum}{C\sum^{n_{t+1}}_{i=k+1}\sigma^2_i}
\newcommand{\hegn}{h^k}
\newcommand{\htrue}{h^*}
\newcommand{\htruek}{{h^*}^k}
\newcommand{\hnew}{h_{t+1}}
\newcommand{\Frisk}{R^F}
\newcommand{\emprisk}{\hat{R}^F}
\newcommand{\Rademacher}{\mathcal{R}_{S_{t+1}}}
\newcommand{\loss}{\mathcal{\ell}^F}
\newcommand{\tnew}{t+1}
\newcommand{\Wtrue}{D^*}
\newcommand{\Wtruek}{\epsilon^* V^T_k}
\newcommand{\Wnew}{D_{t+1}}
\newcommand{\Wegn}{\epsilon V^T_k}
\newcommand{\Vk}{V^T_k}

\newcommand{\size}{S_{t+1}}

\newcustomtheorem{customthm}{Theorem}
\newcustomtheorem{customlemma}{Lemma}
\usepackage[utf8]{inputenc}
\usepackage{newunicodechar}

\title{Shared LoRA Subspaces for \textit{almost} Strict Continual Learning}

\author{Prakhar Kaushik\thanks{equal contribution}\,\,\thanks{Corresponding author: \texttt{prakhark2@gmail.com}}, Ankit Vaidya\footnotemark[1], Shravan Chaudhari, Rama Chellappa, Alan Yuille \\\\
\normalsize{Department of Computer Science}\\
\normalsize{Johns Hopkins University}\\
\normalsize{Baltimore, MD, USA} \\
\normalsize{\texttt{\{pkaushi1,schaud35,avaidya7,rchella4,ayuille1\}@jhu.edu} }\\
\normalsize{\url{https://toshi2k2.github.io/share/}}
}

\begin{document}
\maketitle
\begin{abstract}
Adapting large pretrained models to new tasks efficiently and continually is crucial for real-world deployment but remains challenging due to catastrophic forgetting and the high cost of retraining. While parameter-efficient tuning methods like low rank adaptation (LoRA) reduce computational demands, they lack mechanisms for strict continual learning and knowledge integration, without relying on data replay, or multiple adapters. We propose Share, a novel approach to parameter efficient continual finetuning that learns and dynamically updates a single, shared low-rank subspace, enabling seamless adaptation across multiple tasks and modalities. Share constructs a foundational subspace that extracts core knowledge from past tasks and incrementally integrates new information by identifying essential subspace directions. Knowledge from each new task is incorporated into
this evolving subspace, facilitating forward knowledge transfer, while minimizing catastrophic interference. This approach achieves up to 100× parameter reduction and 281× memory savings over traditional LoRA methods, maintaining performance comparable to jointly trained models. A single Share model can replace hundreds of task-specific LoRA adapters, supporting scalable, asynchronous continual learning. Experiments across image classification, natural language understanding, 3D pose estimation, and text-to-image generation validate its effectiveness, making Share a practical and scalable solution for lifelong learning in large-scale AI systems.

\end{abstract}
    
\section{Introduction}
\label{sec:intro}

\begin{figure}[t]
  \centering
  \includegraphics[width=\columnwidth]{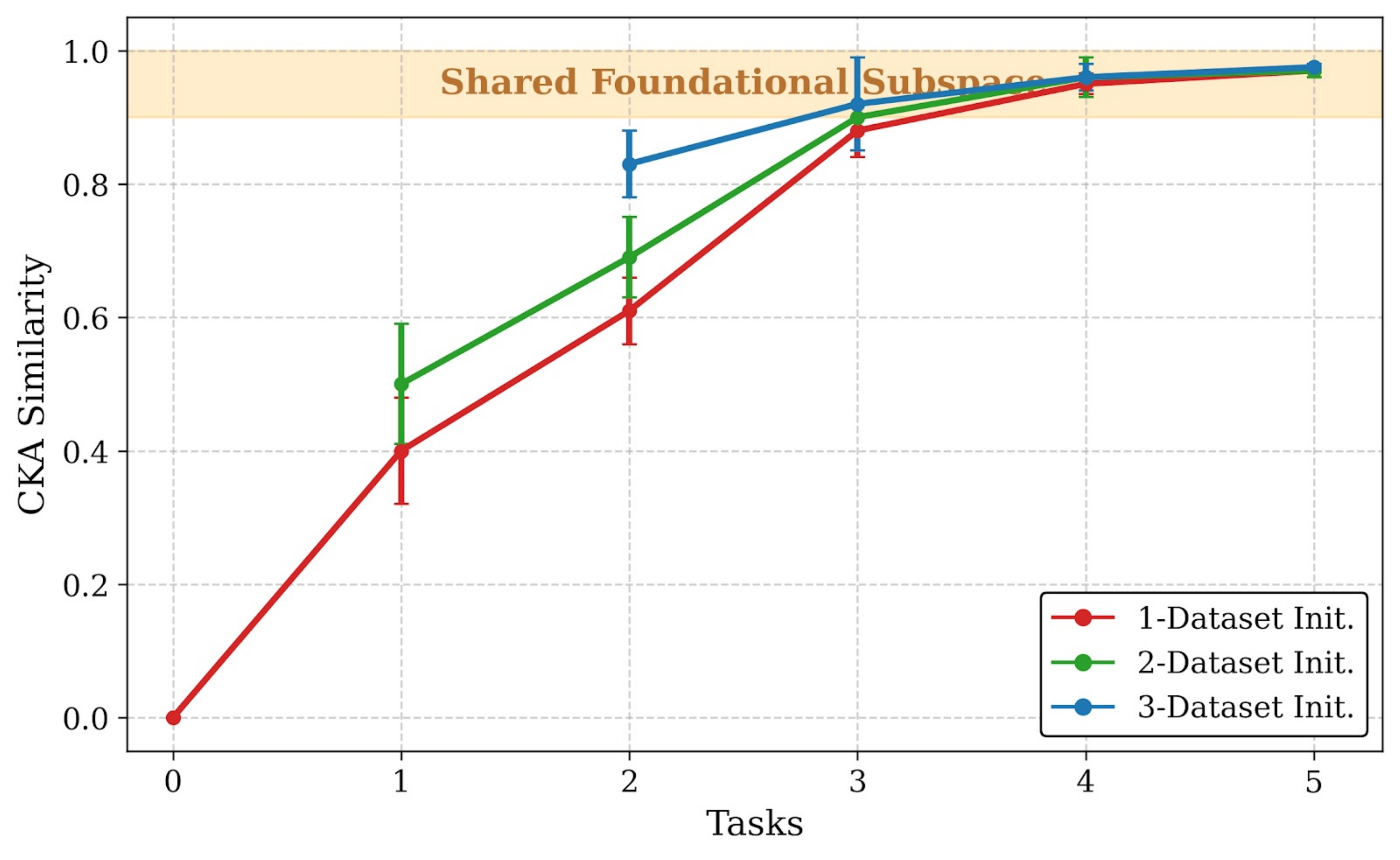}
  \caption{\textbf{Evidence of a Shared Foundational Subspace in Continual Learning.} Linear CKA similarity analysis reveals a universal weight subspace (orange) emerging during sequential learning. Three independent trajectories (red, green, blue), starting from different GLUE task subsets, show monotonic convergence to this shared subspace, reaching near-perfect alignment ($>0.95$) by task $T=5$. Shaded regions show standard deviation across experiments. These results demonstrate: (1) the existence of a common foundational weight subspace that efficiently encodes cross-task knowledge, and (2) our method's ability to discover it through continual adaptation without catastrophic forgetting. This convergence reveals how low-rank adapters naturally bias models toward shared weight structures that generalize across diverse tasks.}
  \label{fig:teaser}
\end{figure}

\begin{figure*}[!htb]
  \centering
  \includegraphics[width=\textwidth]{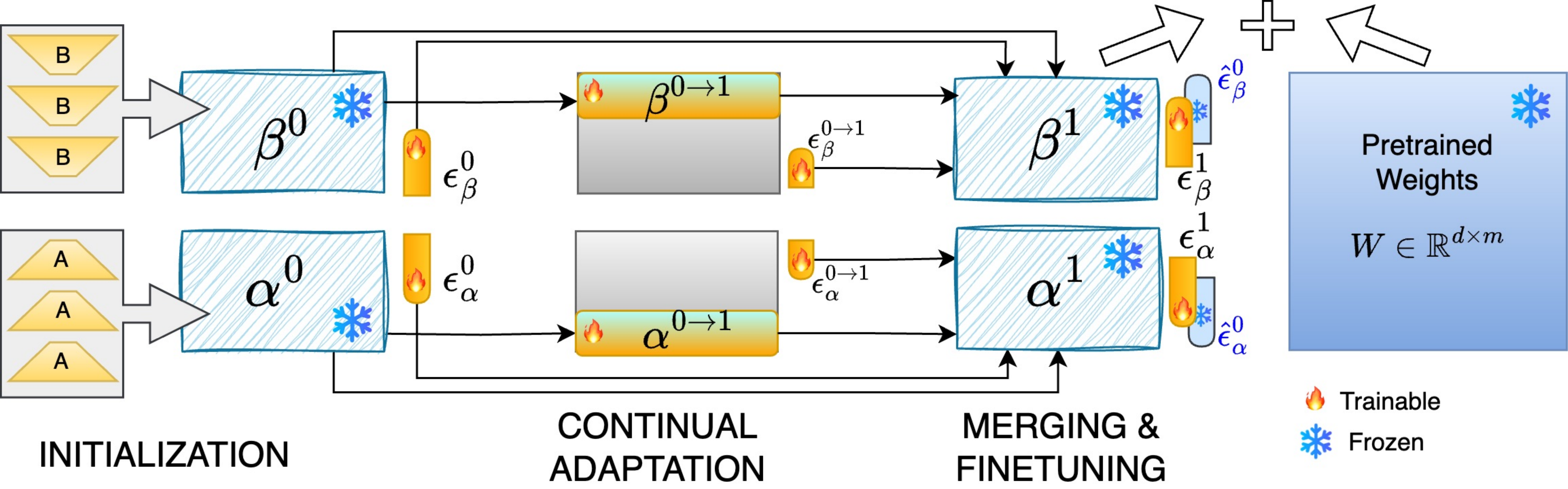}
  \caption{\textbf{Share}. Our continual reparameterization where only principal coefficients $\epsilon^t$ are trained. \textbf{a.} \textbf{Initialization} We initialize the principal factors ($\alpha^{0}, \beta^{0}$) of our Share model using available LoRA~\cite{hu2021lora} adapters ($A, B$). \textbf{b. Continual Adaptation} Few top $\varphi \ll k$ factors, shown as $\alpha^{0\to 1}, \beta^{0 \to 1}$, and temporary coefficients $\epsilon^{0\to 1}$ are fine-tuned when new data is incrementally received. \textbf{Merging \& Fine-tuning} The factors $\alpha^{0}, \beta^{0}$ and temporary factors $\alpha^{0\to 1}, \beta^{0\to 1}$ are merged using the initialization procedure, and $\alpha^1, \beta^1, \epsilon^i_{\alpha,\beta} \quad \forall i \in [0,1]$ are analytically recalculated. $\epsilon^{1}$ can then be further fine-tuned to boost performance.}
  \label{fig:eflux}
\end{figure*}

Adapting large pretrained models, like LLMs, VLMs and Diffusion models, for continual learning presents significant challenges, notably catastrophic forgetting and the substantial resources required for retraining. Traditional fine-tuning methods often require retraining all model parameters, leading to inefficiencies, especially as model sizes increase. As these models scale, they require more resources and memory, making them inaccessible to ordinary researchers and increasing environmental impact~\cite{Ligozat2021UnravelingTH}. Parameter-efficient finetuning techniques, such as LoRA~\cite{hu2021lora}, address some of these issues by introducing trainable low-rank matrices into each layer of the model, effectively reducing the number of parameters that need adjustment during finetuning. However, while LoRA reduces computational demands, it lacks mechanisms for continual learning and knowledge integration, often requiring separate adapters for each task, which can be inefficient and hinders cross-task knowledge sharing which improves robustness and domain generalization~\cite{sanh2021multitask, raffel2020exploring}.

Recent advancements have explored integrating parameter-efficient tuning with continual learning strategies. For instance, methods like O-LoRA~\cite{wang2023orthogonal} propose learning new tasks in orthogonal subspaces to mitigate forgetting. However, these approaches do not fully leverage shared knowledge across tasks, limiting forward (and backward) knowledge transfer, as they require training individual models, or experts, for new tasks. All such methods fall short of \textbf{Strict Continual Learning}~\cite{kaushik2021understandingcatastrophicforgettingremembering}, which requires models to learn continually, without data replay, additional models, or increase in model size, much like humans. Our work tries to remedy this.

Recently, \textit{Universal Weight Subspace Hypothesis}~\cite{kaushik2025universalweightsubspacehypothesis} has proven that neural network weights often converge to layerwise, shared subspace across tasks and datasets, which can be employed for efficient training, inference and model merging. Method like EigenLoRAx~\cite{kaushik2025eigenlorax} have applied this concept for very efficient finetuning achieving equal or better performance to LoRA at fraction of the cost. However, ~\cite{kaushik2025eigenlorax} extract the shared subspace beforehand, and the question of continually improving or learning the shared "universal" subspace is left unanswered. In this work, we show, with theoretical analysis, that our simple method, \textbf{Share}, is capable of approximating the shared subspace in an \textit{almost} strict continual setup.

In this paper, we introduce \textbf{Share}, a novel approach to \textbf{Parameter-Efficient Continual Finetuning (PaCT)} that learns and dynamically updates a shared low-rank subspace, enabling seamless adaptation across multiple tasks and modalities. Share constructs a foundational subspace that captures core knowledge from past tasks and incrementally integrates new information by identifying and expanding essential subspace directions. Each new task is projected into this evolving subspace, facilitating forward knowledge transfer, while older knowledge is analytically reprojected to minimize catastrophic interference. Interestingly, we also observe instances of \textit{backward knowledge transfer} due to presence of this subspace. This approach achieves up to \textbf{100×} parameter reduction and \textbf{281×} memory savings over traditional LoRA methods, maintaining performance comparable to jointly trained models. A single Share model can replace hundreds of task-specific LoRA adapters, supporting scalable, asynchronous continual learning. Experiments across image classification, 3D object pose estimation, natural language understanding, and text-to-image generation validate its effectiveness, making Share a practical and scalable solution for lifelong learning in large-scale AI systems.

To our knowledge, Share is among the earliest works to present a viable solution for Parameter-Efficient Continual Finetuning (for almost strict continual learning) applicable to diverse and complex models.

Our main contributions are as follows:
\begin{itemize}
  \item We introduce a \textbf{replay-free, (almost strict) continual learning} method for large pretrained models that leverages a shared, low-rank foundational subspace of adapters to achieve \textbf{compute and memory-efficient} learning.
  \item Share enables \textbf{continual learning from hybrid streams} of both data and LoRA adapters, seamlessly merging information into a single model.
  \item Share requires orders of magnitude fewer trainable parameters (up to \textbf{100× reduction}) for finetuningand offers up to \textbf{281× memory savings} compared to traditional LoRA methods.
  \item A single set of \textbf{continually learned Share principal factors} can replace hundreds of LoRA adapters, facilitating scalable and efficient model deployment.
  \item We demonstrate Share's applicability across \textbf{various models and modalities}, including image classification, 3D object pose estimation, natural language understanding, commonsense reasoning, math reasoning, and text-to-image generative models.
\end{itemize}

These contributions position Share as a \textbf{scalable and practical} solution for efficient continual learning in large-scale AI systems, addressing critical needs in the deployment of adaptable machine learning models.

\section{Related Work}
\label{sec:related}

\textbf{Efficient Replay Free Continual Learning}\quad While continual learning addresses catastrophic forgetting~\cite{FRENCH1999128}, its application to large models remains challenging, particularly under strict constraints that prohibit data replay and parameter growth~\cite{kaushik2021understandingcatastrophicforgettingremembering}. Recent methods for large models~\cite{liang2024inflora, Yu2024BoostingCL,Smith2023ContinualDW,smith2024continual,wang2023orthogonal} require ever-growing adapter sets and primarily operate as mixture-of-experts systems, limiting their practical utility to specific domains, while also violating the conditions of \textbf{Strict Continual Learning}~\cite{kaushik2021understandingcatastrophicforgettingremembering}, which requires no access to previous data, additional models, or increase in model size. In contrast, \textit{Share} enables true continual learning across diverse architectures and modalities without requiring data replay, and with negligible increase in number of model parameters, almost fulfilling the conditions of the strict setup.\\
\textbf{Model Merging}\quad While recent work has shown promise in merging task-specific models~\cite{marczak2024magmax, kaushik2025eigenlorax, shah2025ziplora,yadav2024ties}, these approaches either focus on narrow domains, lack continual learning capabilities, or require multiple model instances~\cite{zhou2022mixtureofexperts}. \textit{Share} advances this field by enabling efficient, continuous merging of both incoming data and adapters while preserving knowledge across tasks.\\
\textbf{Low-Rank Adaptation}\quad LoRA~\cite{hu2021lora} and its variants~\cite{dora,kopiczko_vera_2023} have made model adaptation more efficient but lack mechanisms for continual knowledge integration. Current scaling solutions either focus on adapter switching~\cite{sheng2023s} or batch optimization~\cite{wen2023batched}, often at the cost of performance~\cite{sharma_laser_2023,gholami2022survey}. \textit{Share} uniquely addresses these limitations through its shared foundational subspace approach, enabling continuous knowledge accumulation while maintaining both efficiency and performance. Notably, \textit{Share} can compress hundreds of adapters into a single set of factors through an elegant data and gradient-free process.

\section{Method}
\paragraph{Problem Setting}
We study \textit{parameter-efficient continual finetuning}, where a pretrained model \( h(W_0,x) \) adapts to a sequence of tasks \( \{\tau_1, \tau_2, \dots, \tau_t\} \) with minimal parameter overhead. At each timestep \( t \), we receive either task-specific data \( \mathcal{S}_t = \{(x_i^t, y_i^t)\}_{i=1}^{S_t} \) or a LoRA adapter \( \Delta W_t = B_t A_t\), with no access to past tasks. The goal is to \textit{continually integrate new knowledge} while minimizing trainable parameters and memory usage, ensuring \textit{knowledge retention}. Here, \( W_0 \) remains \textit{fixed} across tasks, while a newly obtained (or trained) \( \Delta W_t \) for task $\tau_t$ is available only at time step \( t \). This formulation allows \textit{efficient continual adaptation} without catastrophic forgetting or excessive storage costs.

To this end, our method, \textit{Share}, maintains an \textit{evolving low-rank parameter subspace} for each layer of the pretrained model. For a new task, we reuse the basis vectors spanning the subspace and learn task-specific coefficients \( \epsilon^t \) instead of storing separate adapters. We focus on a single layer to further explain the setup and theoretical analysis.

\label{sec:method}
\subsection{Motivation}
\label{ssec:motive}

\paragraph{Motivation}
\textit{Share} is based on the hypothesis that for similar tasks and modalities, \\
``\textit{LoRA adapters of a pretrained model share a common low-rank subspace.}".\\ If we can identify the principal basis vectors of this subspace~\cite{139758}, any new adapter can be expressed as a \textit{linear combination} of these basis vectors, reducing the need for separate adaptation. We validate this in Appendix \cref{fig:pca}, where we initialize all LoRA adapters using \textit{Share}, extract the top \( k \) principal basis vectors via SVD, and reconstruct the original adapters analytically. The reconstruction error and performance evaluation confirm that a single set of \textit{Share} principal basis vectors can approximate all adapters \textit{without significant performance loss}.

Identifying this subspace may initially require a lot of adapters, but in real-world scenarios, we often start with limited or even a single adapter and progressively integrate new ones. \textit{Share} enables \textit{incremental discovery and refinement} of this foundational subspace as more adapters and data become available, making it a scalable and adaptive solution for continual learning.

\subsection{Share: Continual Shared Subspace Adaptation}
\label{ssec:eigen}
\textit{Share} maintains two sets of parameters: \textit{principal basis vectors}, which remain frozen during finetuning, and \textit{task-specific coefficients}, which are learned. The learning process consists of three phases—\textit{initialization, continual adaptation, and merging} \& \textit{fine-tuning}—as illustrated in \cref{fig:eflux}. During \textit{initialization}, an incomplete low-rank subspace of principal basis vectors is formed. In the \textit{continual adaptation} phase, this subspace is refined as new data or LoRA adapters arrive. Finally, in the \textit{merging \& fine-tuning} phase, newly learned basis vectors are integrated with existing ones, updating the principal basis vectors, and optimizing task-specific coefficients.

\subsubsection{Step 1 - Initialization}
\label{method:init}

We initialize the foundational subspace (i.e., principal basis vectors) using \( t \geq 1 \) LoRA adapters, where a larger \( t \) provides a more representative subspace. Given a frozen pretrained weight matrix \( W_0 \in \mathbb{R}^{n\times d} \), LoRA introduces two low-rank trainable matrices, \( B \in \mathbb{R}^{n\times r} \) and \( A \in \mathbb{R}^{r\times d} \), where \( r \) is the LoRA rank. The modified forward pass is $h = W_0 x + \Delta W x = W_0 x + B A x.$
To compute the basis vectors of previously seen $t$ tasks, we reshape the LoRA matrices as stacked rank-\( r \) vectors:
\begin{align*}
  \mathcal{B}^t = [B_1, B_2, \dots, B_t] \in \mathbb{R}^{n \times (t r)}. \label{eq:rankdataB}
\end{align*}
A similar stacked matrix $\mathcal{A}^t$ is calculated for $A$ matrices from $t$ tasks.
We center the matrices $\mathcal{B}^t$ and $\mathcal{A}^t$ and perform SVD on the mean-centered matrices $\overline{\mathcal{B}}_t$ and $\overline{\mathcal{A}}_t$ respectively. This gives us principal basis vectors, $\beta^t$ and $\alpha^t$, respectively. We repeat this for each layer. Then we select the top \( k \) basis vectors based on the highest eigenvalues that span our reusable shared subspace:
\begin{align*}
  \beta^t_{[:k]} \in \mathbb{R}^{n\times k}, \text{\hspace{0.5mm}} \alpha^t_{[:k]} \in \mathbb{R}^{d \times k}
\end{align*}
We keep these \textit{principal basis vectors frozen} during finetuning, while training only the randomly initialized \textit{coefficients} \( \epsilon_{\alpha}, \epsilon_{\beta} \in \mathbb{R}^{k \times p} \), where \( p \) (\textit{pseudo-rank}) can be as small as 1.

This initialization significantly reduces trainable parameters compared to LoRA (e.g., 100× fewer parameters for one task in the GLUE~\cite{glue} experiment, \cref{ssec:glue}), yielding a relative savings of $1 - \frac{k \times p}{(n+d) \times r}$.
The modified forward pass with an initialized \textit{Share} model is:
\begin{equation}
  h_t = W_0 x + (\beta^t \epsilon^t_{\beta})(\alpha^t \epsilon^t_{\alpha})^\top x \quad \forall x\in\mathcal{S}_t
  \label{eq:elora}
\end{equation}

Notably, this initialization is \textit{data- and gradient-free}. If no LoRA adapters are available, \textit{Share} can be initialized by training a LoRA adapter on initial data.

\subsubsection{Step 2 - Continual Adaptation}
After initialization, we receive either new task adapters $\Delta W_{t+1}$ or ar $\mathcal{S}_{t+1}$. If only adapters arrive, we proceed directly to merging. For the latter case, we perform efficient adaptation by adding new basis vectors while preserving the foundational subspace:

\noindent
\textbf{Learning new basis vectors:} To learn new basis vectors at time $t+1$, we initialize $\varphi < k$ temporary basis vectors along with their coefficients as follows:
\begin{align*}
  \beta^{t\rightarrow t+1} &= \beta^{t}_{[:\varphi]} \in \mathbb{R}^{n\times \varphi} \\
  \alpha^{t\rightarrow t+1} &= \alpha^{t}_{[:\varphi]} \in \mathbb{R}^{d\times \varphi} \\
  \epsilon^{t\rightarrow t+1}_\beta, \epsilon^{t\rightarrow t+1}_\alpha &\sim \mathcal{N}(0,\sigma^2) \in \mathbb{R}^{\varphi\times p}
\end{align*}

The modified forward pass becomes:
\begin{align*}
  h = W_0x + (\beta^{t\rightarrow t+1}\epsilon^{t\rightarrow t+1}_\beta)(\alpha^{t\rightarrow t+1}\epsilon^{t\rightarrow t+1}_\alpha
  )^\top x \text{\hspace{1.8mm}} \forall x\in\mathcal{S}_t
\end{align*}

This temporary expansion requires only $\varphi(n + d + 2p)$ parameters, significantly fewer than LoRA's $r(n + d)$ parameters. Both temporary basis vectors and coefficients are optimized before merging.

\subsubsection{Step 3 - Merging and Finetuning}\label{sec:merge_ft}
After continual adaptation, we merge the temporary basis vectors with the foundational subspace while preserving knowledge from all tasks. Below we show the steps to compute $(\beta^{t+1}, \epsilon^{t+1}_\beta)$ which can be identically followed to obtain $(\alpha^{t+1}, \epsilon^{t+1}_\alpha)$.\\
\textbf{Knowledge Integration:} First, reconstruct task-specific adapters using shared basis vectors
$\hat{B_i}=\beta^t \epsilon^i_\beta$ and then stack them along with the new basis vectors learned above:
\begin{align*}
  \hat{\mathcal{B}}^{t+1} = [ \hat{B}_1, \dots, \hat{B}_t, \beta^{t\rightarrow t+1} \epsilon^{t+1}_\beta ] \in \mathbb{R}^{n\times (t+1)p}
\end{align*}
\textbf{Factor Update:} Obtain new shared basis vectors $\beta^{t+1}$ as follows:
\begin{align}
  U_k\Sigma_kV^T_k = \text{SVD}(\hat{\mathcal{B}}^{t+1})_{[:k]} \label{eq:svd_usv} \\
  \beta^{t+1} = U_k,\quad [\epsilon^1_{\beta}, \cdots, \epsilon^{t+1}_{\beta}] = \Sigma_kV^T_k \label{eq:factor_update}
\end{align}
\textbf{Coefficient Update:} Using matrix projection and Moore-Penrose pseudoinverse of $\beta^{t+1}$, we analytically calculate task coefficients $\epsilon^{t+1}_{\beta}$
that minimizes the reconstruction error $\|\hat{B}_i - \beta^{t+1} \epsilon^{t+1}_{\beta}\|_F^2$ as follows:
\begin{align}
  \epsilon^i_{\beta} &= ((\beta^{t+1})^T\beta^{t+1})^{-1} (\beta^{t+1})^T \hat{B}_i
\end{align}
When $\beta^{t+1}$ has orthonormal columns, this simplifies to $\epsilon^i_{\beta} = (\beta^{t+1})^T \hat{B}_i$ which can optionally be further finetuned for enhanced performance if the continual assumptions are relaxed to access some data from previous tasks (referred as \textit{Share}-full).
This gradient-free process yields a single set of \textit{Share} basis vectors and $t$ sets of lightweight coefficients.
The storage advantage becomes pronounced as $t$ grows, since $k,p \ll r$ in practice.
A detailed description is provided in Appendix ~\cref{algo:share}.

\subsubsection{Ablation: How to decide $k$, $p$ and $\varphi$?}\label{sssec:hyper}
The number of \textit{Share} basis vectors $k$ is determined by a threshold based on the explained variance of the factor data matrix $D$. We find that $k$ with as low as $60\%$ explained variance is effective. Other methods~\cite{pca_bayes, gavishsl} using eigenvalues of $D$ can also determine $k$. Our analysis shows that $\varphi=[1,k/4]$ is effective for identifying new basis vectors in the foundational subspace. A pseudo-rank of $p=1$ is effective, with higher values yielding minimal additional benefits. Starting with $p=r/3$ is advisable. A detailed ablation of these selections is provided in the appendix \cref{sec:ablation_app}.

\begin{table*}[t]
  \centering
  \small
  \renewcommand{\arraystretch}{1.1}
  \setlength{\tabcolsep}{4.2pt}

  \begin{tabular}{@{} l cc c llllll c @{}}
    \toprule
    \textbf{Method} & \textbf{Params} $\downarrow$ & \textbf{Size (MB)} $\downarrow$ & \textbf{Task} & \textbf{COLA} & \textbf{MRPC} & \textbf{RTE} & \textbf{STSB} & \textbf{QNLI} & \textbf{SST-2} & \textbf{Avg.} \\
    \midrule
    Upper Bound & 125M & 500 & -- & 59.91 & 89.01 & 79.70 & 90.90 & 92.31 & 91.28 & 83.90 \\
    LoRA (non-CL) & 1.2M$\times$6 & 81.6 & -- & 59.56 & 86.76 & 77.61 & 90.81 & 92.53 & 93.35 & 83.43 \\
    \midrule
    \multirow{6}{*}{\textit{Share} (CL)} & \multirow{6}{*}{\textbf{0.012M}} & \multirow{6}{*}{\textbf{0.29}}
    & T-0 & 56.00 & -- & -- & -- & -- & -- & -- \\
    & & & T-1 & 55.54 \scalebox{0.7}{\color{red}{$\downarrow$0.46}} & 68.38 & -- & -- & -- & -- & -- \\
    & & & T-2 & 56.50 \scalebox{0.7}{\color{ForestGreen}{$\uparrow$0.96}} & 68.38 & 73.29 & -- & -- & -- & -- \\
    & & & T-3 & 56.50 & 68.38 & 73.29 & 88.91 & -- & -- & -- \\
    & & & T-4 & 56.24 \scalebox{0.7}{\color{red}{$\downarrow$0.26}} & 68.38 & 73.65 \scalebox{0.7}{\color{ForestGreen}{$\uparrow$0.36}} & 88.91 & 91.84 & -- & -- \\
    & & & T-5 & \bfseries 55.99 \scalebox{0.7}{\color{red}{$\downarrow$0.51}} & \bfseries 68.38 & \bfseries 73.29 \scalebox{0.7}{\color{red}{$\downarrow$0.36}} & \bfseries 88.91 & \bfseries 91.95 \scalebox{0.7}{\color{ForestGreen}{$\uparrow$0.11}} & \bfseries 93.58 & \bfseries 78.69 \\
    \midrule
    \textit{Share}-full & \textbf{0.012M} & \textbf{0.29} & -- & \textbf{59.81} & \textbf{86.99} & \textbf{77.62} & \textbf{90.80} & \textbf{92.66} & \textbf{93.39} & \textbf{83.44} \\
    \bottomrule
  \end{tabular}
  \caption{Continual GLUE benchmark results. We report Matthews correlation for CoLA, Pearson correlation for STS-B, and accuracy for remaining tasks. Results shown as mean ± std across 3 random seeds. Higher values indicate better performance. Experiments use a data-only setup (no initial LoRA).\tablefootnote{All Share results significantly outperform catastrophic forgetting baselines ($p < 0.01$, paired t-test).} $\downarrow$ denotes forgetting and $\uparrow$ denotes backward transfer relative to the task's historical peak. \textit{Share} achieves significant parameter efficiency while maintaining high average performance.}
  \label{tab:glue_benchmark_results}
\end{table*}

\subsection{Theoretical Analysis}
\label{ssec:theory}

For brevity, we use a general $D_t \in \mathbb{R}^{n_t \times d}$ represents the LoRA parameters ($B_t$ or $A_t$) for task $\tau_t$. Let $h_{t} = h(D_t,x) = W_0x + D_tx$ be an independent empirical risk minimizer (ERM) of task $\tau_t$ with weights $D_t$ of rank-$n_t$.

\begin{proposition}
  \label{thm:inc}
  \textbf{(Incremental Subspace Error Bound)}: Let $\mathcal{D}^{t} = [D_1, D_2, \ldots, D_t] \in \mathbb{R}^{N_t \times d}$ be cumulatively stacked weight matrix up to task $\tau_t$, where $N_t = \sum_{i=1}^t n_i$.
  Using the Share approximation at task $t$ with $k$ principal basis vectors we get $\hat{\mathcal{D}}^{t}$ be the rank-$k$ SVD of $\mathcal{D}^{t}$ such that $\hat{\mathcal{D}}^{t}=\text{SVD}(\mathcal{D}^{t})_{[:k]}$.
  Then the reconstruction error satisfies:
  $$\|\mathcal{D}^{t} - \hat{\mathcal{D}}^{t}\|_F^2 = \sum_{i=k+1}^{\min(N_t, d)} (\sigma_i^{(t)})^2,$$ where $\sigma_i$'s are the singular values of $\mathcal{D}^{t}$ for the non-principal basis vectors. See proof in appendix \cref{sec:theory_supp}.

\end{proposition}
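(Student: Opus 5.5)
This is the Eckart--Young--Mirsky identity applied to the stacked matrix $\mathcal{D}^{t}$. The plan is to write out the full SVD, read off the truncation explicitly, and compute its Frobenius norm using orthogonal invariance.

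\textbf{Step 1: write the full SVD.} Set $m=\min(N_t,d)$ and write
$$\mathcal{D}^{t}=U\Sigma V^\top=\sum_{i=1}^{m}\sigma_i^{(t)}u_iv_i^\top .$$
Here $U\in\mathbb{R}^{N_t\times N_t}$ and $V\in\mathbb{R}^{d\times d}$ are orthogonal, and $\sigma_1^{(t)}\ge\cdots\ge\sigma_m^{(t)}\ge 0$. If some $\sigma_i^{(t)}=0$, the sum simply contains zero terms, so no separate case is needed.

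\textbf{Step 2: identify the truncation.} By definition, $\hat{\mathcal{D}}^{t}=\mathrm{SVD}(\mathcal{D}^{t})_{[:k]}=U_k\Sigma_kV_k^\top=\sum_{i\le k}\sigma_i^{(t)}u_iv_i^\top$. This is the Share reconstruction: the basis is $U_k$ and the coefficients are $\Sigma_kV_k^\top$, as in the factor update \eqref{eq:factor_update}. It then follows that
$$\mathcal{D}^{t}-\hat{\mathcal{D}}^{t}=\sum_{i=k+1}^{m}\sigma_i^{(t)}u_iv_i^\top=U\,\mathrm{diag}\bigl(0,\dots,0,\sigma_{k+1}^{(t)},\dots,\sigma_m^{(t)}\bigr)V^\top .$$

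\textbf{Step 3: compute the norm.} The Frobenius norm is invariant under left and right multiplication by orthogonal matrices, because $\|UMV^\top\|_F^2=\mathrm{tr}(VM^\top U^\top UMV^\top)=\mathrm{tr}(M^\top M)$. Applying this to the residual gives $\sum_{i=k+1}^{m}(\sigma_i^{(t)})^2$ directly, which is the claimed identity.

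\textbf{Optimality (not needed for the equality).} One may also note that this error is minimal over all rank-$k$ matrices. Weyl's inequality gives $\sigma_{i+k}(\mathcal{D}^{t})\le\sigma_i(\mathcal{D}^{t}-X)+\sigma_{k+1}(X)$, and $\sigma_{k+1}(X)=0$ for any rank-$k$ $X$. Summing the squares over $i$ then yields $\|\mathcal{D}^{t}-X\|_F^2\ge\sum_{i>k}(\sigma_i^{(t)})^2$. This justifies calling the SVD truncation the best possible Share approximation at rank $k$.

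\textbf{Main obstacle.} The computation itself is routine; the difficulty is making sure the statement is stated consistently. First, $k\le m$ must hold, since otherwise the sum is empty and the error is zero. Second, if the centering used in the method is applied, the singular values must be those of the centered matrix. Third, in the continual setting the earlier blocks of $\mathcal{D}^{t}$ are themselves reconstructions $\hat B_i$, so the $\sigma_i^{(t)}$ must be read as singular values of the matrix that is actually stacked. With these conventions fixed explicitly, the proof consists of Steps 1--3.
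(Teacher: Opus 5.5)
Your proposal is correct and follows the paper's proof. The paper simply invokes the Eckart--Young--Mirsky theorem for the truncation $M_k=\sum_{i\le k}\sigma_i u_i v_i^\top$ and reads off $\|M-M_k\|_F^2=\sum_{i>k}\sigma_i^2$; you unpack the same fact directly through orthogonal invariance of the Frobenius norm, and you rightly note that the optimality half (your Weyl-inequality argument) is not needed for the stated equality.
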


\begin{theorem}\label{theorem:upper_bound}
  Let $\hegn\in\mathcal{H}_k : \hegn=h(\Wegn,x)$ denote rank-k ERM while finetuning for a new task $\tau_{t+1}$ with true solution $\htrue\in \mathcal{H}:\htrue=h(\Wtrue,x)$ of rank-$n_{t+1}$. Here $\epsilon$ with rank-$k$ is trainable and $\Vk$ are fixed right singular vectors from $\text{SVD}(\mathcal{D}^t)_{[:k]}$. Given a Lipschitz continuous loss ($\loss(\cdot)$) that is strong convex over the weight spaces spanned by $D$ and $\Vk$ with some Lipschitz constant ($L$), we say with probability at least $1-4\delta$ for some $\delta>0$,
  \begin{equation}
    \label{equation:bound_baseline}
    \Vert \Wtrue - \Wnew\lVert^2_F \leq \sqrt{\frac{C_1n_{t+1}}{\size}}+\sqrt{\frac{2\ln(1/\delta)}{\size}} + C_2
  \end{equation}
  \begin{equation}
    \label{equation:bound_ours}
    \Vert \Wtrue - \Wegn \lVert^2_F \leq \sqrt{\frac{C_1k}{\size}}+\sqrt{\frac{2\ln(1/\delta)}{\size}} + \singularsum + C_2
  \end{equation}
  $\sigma_{k+1}, \dots, \sigma_{n_{t+1}}$ denote singular values of $\Wtrue V_{\perp}$, where $V_{\perp}$ contains right singular vectors of $\mathcal{D}^t$ orthogonal to $V_k$. Here, we assuming $\lVert D \rVert_F \leq B$ and normalized input data. Here $S_{t+1} = |\mathcal{S}_{t+1}|$ refers to dataset size for task $\tau_{t+1}.$
\end{theorem}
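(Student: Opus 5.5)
The plan is to go through the standard chain strong convexity $\Rightarrow$ excess risk $\Rightarrow$ generalization gap via Rademacher complexity, and to treat the two bounds \eqref{equation:bound_baseline} and \eqref{equation:bound_ours} in parallel. The only difference between them is the hypothesis class: $\mathcal{H}$ for the unconstrained rank-$n_{t+1}$ solution $\Wnew$, and $\mathcal{H}_k$ for the constrained solution $\Wegn$. The latter forces us to compare $\htrue$ with its best approximant inside $\mathcal{H}_k$.

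\textbf{From weights to excess risk.} Let $\mu$ be the strong-convexity modulus of $\Frisk$ over the relevant weight spaces. Let $W$ be any minimizer of the empirical risk over its class, and let $W^\circ$ be the population-risk minimizer in that class. Strong convexity together with first-order optimality gives
\[
\tfrac{\mu}{2}\lVert W - W^\circ\rVert_F^2 \le \Frisk(W)-\Frisk(W^\circ).
\]
For the baseline class $W^\circ=\Wtrue$. For the Share class we split
\[
\lVert \Wtrue-\Wegn\rVert_F^2 \le 2\lVert \Wtrue - \Wtrue V_kV_k^T\rVert_F^2 + 2\lVert \Wtrue V_kV_k^T-\Wegn\rVert_F^2 .
\]
The constant $2$ and $2/\mu$ will be absorbed into $C$ and $C_2$. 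The first term equals $\lVert \Wtrue V_\perp\rVert_F^2=\sum_{i}\sigma_i^2(\Wtrue V_\perp)$. Only singular values beyond the $k$ retained directions survive, which gives $\singularsum$; this mirrors the Eckart--Young computation of Proposition~\ref{thm:inc}. I would also use Lipschitzness of $\loss$ (with $L$) to bound $\Frisk(\Wtrue V_kV_k^T)-\Frisk(\Wtrue)$ by $L\lVert \Wtrue V_\perp\rVert_F$. This approximation error is then folded into the same term, using $\lVert D\rVert_F\le B$ and normalized inputs to convert the linear term into the squared one up to constants. That conversion is loose, but it is acceptable given the statement's unspecified constants.

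\textbf{Excess risk via uniform convergence.} The standard decomposition is
\[
\Frisk(\hat W)-\Frisk(W^\circ)\le 2\sup_{h\in\mathcal{H}'}|\Frisk(h)-\emprisk(h)| .
\]
With probability $1-\delta$ (McDiarmid, bounded loss), the supremum is at most
\[
2\Rademacher(\loss\circ\mathcal{H}') + \sqrt{\tfrac{2\ln(1/\delta)}{\size}} .
\]
By Talagrand's contraction, $\Rademacher(\loss\circ\mathcal{H}')\le L\,\Rademacher(\mathcal{H}')$. For linear predictors with Frobenius norm at most $B$, rank at most $m$, and normalized inputs, a bound of the form $\Rademacher\le\sqrt{C_1 m/\size}$ follows. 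To get it, factor $W=UV$ with $U$ having $m$ columns and apply the vector-valued Rademacher bound, so the effective dimension scales with $m$. Taking $m=n_{t+1}$ gives \eqref{equation:bound_baseline}. For the Share class $W=\epsilon V_k^T$ with $V_k$ fixed, the trainable dimension is governed by $k$, giving \eqref{equation:bound_ours}. Collecting failure events from the two-sided concentration for each of the two bounds gives the stated $1-4\delta$ via a union bound. The residual constants (from $\mu$, $L$, $B$, and the optimization/model mismatch) are gathered into $C_2$.

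\textbf{Main obstacle.} The delicate step is obtaining a rank-dependent Rademacher bound $\sqrt{C_1 m/\size}$ rather than the usual norm-only $B/\sqrt{\size}$. I would first try the covering-number route for rank-$m$ matrices, which gives roughly $\sqrt{m(n+d)\log(\cdot)/\size}$, and then absorb the dimension and logarithmic factors into $C_1$. The other subtle point is that strong convexity on the restricted subspace must be used with the constrained minimizer $\Wtrue V_kV_k^T$ rather than $\Wtrue$ itself. This is why the additive projection residual $\singularsum$ appears, instead of being hidden in the excess-risk term.
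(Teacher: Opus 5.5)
Your proposal follows the paper's proof essentially step for step. Strong convexity turns the squared weight error into excess risk. The ERM property plus a Rademacher/concentration bound and a union bound then give the $1-4\delta$ statement with a rank-dependent complexity term; the paper simply cites this term from Bartlett--Mendelson instead of deriving it by covering numbers. Finally, for $\epsilon V_k^T$, the factor-$2$ split into the projection residual $\|D^*V_\perp\|_F^2$ plus a restricted estimation error produces the extra $C\sum_{i=k+1}^{n_{t+1}}\sigma_i^2$ term.

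The one substantive difference is your Lipschitz bound on $R^F(D^*V_kV_k^T)-R^F(D^*)$. Combined with strong convexity at the restricted population minimizer, it controls the gap between that minimizer and the projection $D^*V_kV_k^T$. The paper skips this step by letting $\epsilon^*V_k^T$ serve both as the restricted minimizer (in the strong-convexity step) and as the projection (in the Eckart--Young step), so your version is, if anything, more careful. Your absorption of the resulting linear term and of the $2/\mu$ factors into the free constants is about as loose as the paper, which drops $\mu/2$ silently.
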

Theorem \ref{theorem:upper_bound} gives an upper bound on the Frobenius-norm error of $\Wegn$ and $\Wnew$ with respect tosh $\Wtrue$. The refined bound in \cref{equation:bound_ours} is tighter when task $\tau_{\tnew}$ lies largely in the shared principal subspace, as reflected in the second term involving the truncated singular values of $\Wnew$. In this regime, the first term $\sqrt{k/\size}$ dominates. When $n_{t+1} \ge k$, the bound in \ref{equation:bound_ours} is therefore tighter than that in \ref{equation:bound_baseline}. Likewise, when $n_{t+1} \le k$, the bound for $\Wnew$ is tighter than that for $\Wegn$.

If $\Wtrue$ has substantial mass outside the estimated top-$k$ singular directions, the second term in \ref{equation:bound_baseline} becomes significant, making direct comparison of the two bounds harder. Yet in such cases, $\Wegn$ cannot converge well because it is confined to the top-$k$ components of $\hat{\mathcal{D}}^{(t)}$, whereas $\Wnew$ can still perform substantially better as it can express beyond this subspace. Proof in appendix \cref{sec:theory_supp}.

\section{Experimental Evaluation}
\label{sec:exp}

We evaluate \textit{Share} across diverse vision and language tasks using two rigorous baselines: (1) \textbf{non-CL} LoRA, where separate adapters are independently trained per task to avoid catastrophic forgetting~\cite{kaushik2021understandingcatastrophicforgettingremembering}, and (2) \textbf{Joint} LoRA, trained simultaneously on all tasks, representing theoretical performance upper bounds. Experiments span multiple architectures (ViT~\cite{vision_transformer}, RoBERTa~\cite{roberta}, LLaMA~\cite{llama}, Mistral~\cite{jiang2023mistral7b}, Flux~\cite{flux}), demonstrating \textit{Share}'s diverse applicability.

For direct comparisons with existing parameter-efficient continual learning methods, we focus on image classification and 3D pose estimation tasks, where established benchmarks exist. We include comparisons with prompt-based methods that, while avoiding data storage, still require maintaining task-specific parameters. Broader comparisons with other PeFT methods are deferred to future work due to their limited continual learning capabilities.

We implement two realistic continual learning scenarios: (1) \textit{data-only}, where a single adapter initializes the system followed by streaming data, and (2) \textit{hybrid}, which integrates both adapters and data incrementally. These configurations reflect real-world deployment constraints where historical data access is limited. For each experiment, we analyze parameter count, memory requirements, and temporary expansion during adaptation. We also evaluate \textbf{\textit{Share}-full}, which finetunes the coefficients after Step 3 in \cref{sec:merge_ft} to maximize performance.

Our results demonstrate that \textit{Share} achieves up to $100\times$ parameter reduction and $281\times$ memory savings while maintaining performance comparable to upper-bound baselines, despite learning in a replay-free continual manner. \cref{ssec:scale} shows \textit{Share}'s capability for efficient continual model merging and serving multiple LoRAs at scale, with detailed ablations provided in the supplementary material.
\begin{table*}[!htb]
  \centering

  \renewcommand{\arraystretch}{1}
  \begin{adjustbox}{width=\textwidth,center}
    \begin{tabular}{lrcccccccc}
      \toprule
      \multirow{2}{*}{\textbf{Method}} & \multirow{2}{*}{\textbf{\# Params}} & \multicolumn{2}{c}{\textbf{CIFAR-100}} & \multicolumn{2}{c}{\textbf{Food-100}} & \multicolumn{2}{c}{\textbf{Caltech-100}} & \multicolumn{2}{c}{\textbf{Flowers-100}} \\
      \cmidrule{3-10}
      & & \textbf{Acc.} & \textbf{Forg.↓} & \textbf{Acc.} & \textbf{Forg.↓} & \textbf{Acc.} & \textbf{Forg.↓} & \textbf{Acc.} & \textbf{Forg.↓} \\
      \midrule

      Upper Bound & 86M & 94.20 {\scriptsize$\pm$ 0.8} & -- & 90.40 {\scriptsize$\pm$ 1.2} & -- & 98.32 {\scriptsize$\pm$ 0.2} & -- & 98.83 {\scriptsize$\pm$ 0.8} & -- \\
      \midrule
      Full-seq & 86M & 30.39 {\scriptsize$\pm$ 1.9} & 67.53 {\scriptsize$\pm$ 0.5} & 26.90 {\scriptsize$\pm$ 0.5} & 68.97 {\scriptsize$\pm$ 1.1} & 27.04 {\scriptsize$\pm$ 1.2} & 73.12 {\scriptsize$\pm$ 1.3} & 35.64 {\scriptsize$\pm$ 1.9} & 69.21 {\scriptsize$\pm$ 0.2} \\

      Linear-seq & 0.08M & 68.43 {\scriptsize$\pm$ 0.1} & 21.93 {\scriptsize$\pm$ 0.1} & 60.58 {\scriptsize$\pm$ 0.3} & 23.66 {\scriptsize$\pm$ 0.3} & 76.17 {\scriptsize$\pm$ 0.1} & 25.78 {\scriptsize$\pm$ 0.4} & 78.95 {\scriptsize$\pm$ 0.6} & 17.34 {\scriptsize$\pm$ 0.6} \\
      \midrule
      EWC & 86M & 59.60 {\scriptsize$\pm$ 1.3} & 23.73 {\scriptsize$\pm$ 3.0} & 55.27 {\scriptsize$\pm$ 1.1} & 26.34 {\scriptsize$\pm$ 3.4} & 57.96 {\scriptsize$\pm$ 1.8} & 29.26 {\scriptsize$\pm$ 1.6} & 69.79 {\scriptsize$\pm$ 1.8} & 24.65 {\scriptsize$\pm$ 0.1} \\

      LwF & 86M & 68.22 {\scriptsize$\pm$ 1.6} & 25.14 {\scriptsize$\pm$ 2.4} & 60.15 {\scriptsize$\pm$ 0.7} & 24.63 {\scriptsize$\pm$ 0.8} & 63.26 {\scriptsize$\pm$ 1.4} & 33.42 {\scriptsize$\pm$ 1.9} & 71.78 {\scriptsize$\pm$ 2.0} & 15.44 {\scriptsize$\pm$ 1.5} \\

      L2P & 86M & 83.05 {\scriptsize$\pm$ 1.0} & 2.12 {\scriptsize$\pm$ 0.1} & 70.48 {\scriptsize$\pm$ 1.4} & 4.86 {\scriptsize$\pm$ 0.5} & 89.34 {\scriptsize$\pm$ 1.8} & 6.93 {\scriptsize$\pm$ 0.4} & 94.53 {\scriptsize$\pm$ 1.2} & 7.21 {\scriptsize$\pm$ 0.2} \\

      DualPrompt & 0.35M & 84.77 {\scriptsize$\pm$ 0.7} & 1.40 {\scriptsize$\pm$ 0.3} & 75.31 {\scriptsize$\pm$ 0.6} & 2.85 {\scriptsize$\pm$ 0.3} & 91.52 {\scriptsize$\pm$ 0.9} & 4.84 {\scriptsize$\pm$ 0.1} & 95.25 {\scriptsize$\pm$ 0.8} & 5.60 {\scriptsize$\pm$ 0.4} \\

      CODA-P & 0.42M & 86.25 {\scriptsize$\pm$ 0.7} & 1.16 {\scriptsize$\pm$ 0.2} & 77.58 {\scriptsize$\pm$ 1.2} & 2.73 {\scriptsize$\pm$ 0.5} & 91.44 {\scriptsize$\pm$ 0.3} & 4.03 {\scriptsize$\pm$ 2.1} & 97.02 {\scriptsize$\pm$ 0.4} & 4.67 {\scriptsize$\pm$ 0.3} \\

      DAP & 0.19M & 94.05 {\scriptsize$\pm$ 1.2} & 0.41 {\scriptsize$\pm$ 0.1} & 88.37 {\scriptsize$\pm$ 0.6} & 0.92 {\scriptsize$\pm$ 0.1} & 97.23 {\scriptsize$\pm$ 0.3} & 2.52 {\scriptsize$\pm$ 0.8} & 96.49 {\scriptsize$\pm$ 0.1} & 2.28 {\scriptsize$\pm$ 1.0} \\

      EASE & 0.28M & 87.81 {\scriptsize$\pm$ 0.8} & 0.52 {\scriptsize$\pm$ 0.0} & 88.82 {\scriptsize$\pm$ 1.5} & 1.41 {\scriptsize$\pm$ 0.6} & 96.54 {\scriptsize$\pm$ 0.5} & 3.48 {\scriptsize$\pm$ 1.5} & 97.53 {\scriptsize$\pm$ 0.2} & 5.40 {\scriptsize$\pm$ 1.0} \\

      \textbf{Share} & \textbf{0.10M} & \textbf{94.20 }{\scriptsize$\pm$ 0.9} & \textbf{0.40} {\scriptsize$\pm$ 0.4} & \textbf{90.10} {\scriptsize$\pm$ 1.2} & \textbf{0.7} {\scriptsize$\pm$ 0.9} & \textbf{97.70 }{\scriptsize$\pm$ 1.1} & \textbf{2.18} {\scriptsize$\pm$ 2.4} & \textbf{97.90 }{\scriptsize$\pm$ 0.7} & \textbf{2.33} {\scriptsize$\pm$ 1.2} \\

      \bottomrule
    \end{tabular}
  \end{adjustbox}
  \caption{\textbf{Comprehensive Evaluation of Continual Learning Methods on Image Classification Tasks}}
  \caption*{\footnotesize\textit{Params refer to trainable parameters per task stream in continual setting with ViT-B/16 backbone (86M parameters). Full-seq, EWC, and LwF fine-tune all parameters, while other methods use parameter-efficient strategies. Acc.: final accuracy after learning all tasks; Forg.↓: forgetting rate (lower is better).}}
  \label{tab:vision_models}
\end{table*}
\subsection{Continual Natural Language Understanding}
\label{ssec:glue}

\paragraph{Setup}
We evaluate \textbf{Share} on the Continual GLUE benchmark~\cite{glue} using RoBERTa$_{\text{base}}$~\cite{roberta} across six tasks (MRPC, SST-2, CoLA, QNLI, RTE, STS-B) in a challenging replay-free continual learning setting. Following prior work~\cite{kopiczko_vera_2023}, we exclude time-intensive MNLI and QQP tasks. This data-only configuration initializes Share with a single LoRA adapter, testing its ability to build a foundational subspace from minimal initialization. During adaptation, Share temporarily expands to 450K parameters~(Sec.~\ref{sec:method}), still significantly fewer than LoRA. Hyperparameters include LoRA rank $r=32$, Share factors $k=32$, pseudo-rank $p=8$, and temporary factors $\varphi=4$, following Sec.~\ref{sssec:hyper}.\\
\textbf{Analysis}
Table~\ref{tab:glue_benchmark_results} demonstrates Share's exceptional efficiency-performance tradeoff. The continual learning progression shows consistent knowledge retention as Share's foundational subspace evolves. Notably, we observe evidence of \textbf{backward knowledge transfer}, where performance on early tasks (e.g., CoLA improving from 56.00 to 59.81) benefits from learning subsequent tasks---a phenomenon rarely achieved in parameter-efficient continual learning. This bidirectional knowledge transfer emerges from Share's dynamic subspace refinement, where the analytical recalculation of coefficients allows earlier tasks to leverage representations discovered through later learning.

Remarkably, Share-full achieves 83.44\% average performance with only 0.012M parameters (0.29MB), marginally surpassing non-continual LoRA's 83.43\% which requires 7.2M parameters (81.6MB) across six separate adapters. This represents a \textbf{100$\times$ reduction in trainable parameters} and an unprecedented \textbf{281$\times$ memory savings} while maintaining competitive performance.

\cref{fig:icremental-comps} in appendix illustrates how Share rapidly converges to an effective foundational subspace. The analytical knowledge integration enables more stable learning compared to LoRA's potentially noisy gradient-based optimization, explaining Share-full's slight performance advantage despite using orders of magnitude fewer parameters.

\subsection{Continual Image Classification}
\label{sssec:class}

\paragraph{Setup}
Following established continual learning benchmarks~\cite{dap, wang2022dualprompt, zhang2024hyperadapter}, we evaluate Share on four standard datasets: CIFAR-100~\cite{cifar100}, Food-101~\cite{food101}, Caltech-101, and Flowers-102~\cite{flowers102}. Each dataset is restricted to 100 classes and divided into 10 tasks of 10 classes each. We implement a data-only setup using ViT-B/16~\cite{vision_transformer} pretrained on ImageNet-21K as the backbone. For \textit{Share}, we set $k=10$, $p=1$, $\varphi=2$ and finetune for 10 epochs per task.

We compare against both naive baselines (Full-seq, Linear-seq) and state-of-the-art rehearsal-free methods, including regularization-based approaches (EWC~\cite{Kirkpatrick_2017}, LwF~\cite{li2017learningforgetting}) and architecture-based methods spanning prompts (L2P~\cite{l2p}, DAP~\cite{dap}, CODA-Prompt~\cite{Smith_2023_CVPR}, DualPrompt~\cite{wang2022dualprompt}) and adapters (EASE~\cite{zhou2024expandablesubspaceensemblepretrained}). Following~\cite{dap}, we report final accuracy (Acc.) and forgetting rate.\\
\textbf{Analysis}
Table~\ref{tab:vision_models} reveals Share's superior performance across all datasets. On CIFAR-100, Share achieves 94.20\% accuracy with only 0.10M parameters—matching the upper bound while outperforming the next best method (DAP) which requires nearly twice the parameters (0.19M). Share also consistently demonstrates the lowest forgetting rates, despite using fewer parameters than all competing architecture-based methods.

Notably, Share surpasses prompt-based methods that require storing task-specific prompts and adapter approaches that accumulate separate adapters per task. While prompt-based methods like DualPrompt and CODA-P show competitive performance on some datasets, they require more parameters than Share. This efficiency advantage becomes increasingly significant as the number of tasks grows, highlighting Share's scalability for long task sequences.

Unlike specialized methods optimized specifically for image classification, Share maintains this performance while demonstrating versatility across diverse domains—a capability not established for existing approaches.

\subsection{Continual 3D Object Pose Estimation}
\label{ssec:pose}

\begin{table}[htb]
  \centering
  \small %
  \renewcommand{\arraystretch}{1.1}
  \setlength{\tabcolsep}{8pt} %

  \begin{tabular}{@{} l r *{4}{S[table-format=2.2]} @{}}
    \toprule
    \textbf{Method} & \textbf{Params} $\downarrow$ & \textbf{P3D} & \textbf{L1} & \textbf{L2} & \textbf{L3} \\
    \midrule
    Upper Bound & 30M & 88.10 & 73.20 & 58.40 & 37.80 \\
    \midrule
    \rowcolor{gray!5} \multicolumn{6}{@{}l}{\textit{Data-Replay Methods}} \\
    LwF$^\dagger$   & 25M & 53.47 & 44.58 & 39.77 & 36.61 \\
    ICaRL$^\dagger$ & 25M & 57.74 & 44.03 & 38.15 & 33.52 \\
    iNeMO$^\dagger$ & 25M & 79.28 & 64.71 & 52.26 & 34.01 \\
    \midrule
    \textbf{Share}  & \bfseries 1M & \bfseries 81.80 & \bfseries 69.11 & \bfseries 55.60 & \bfseries 35.50 \\
    \bottomrule
  \end{tabular}
  \caption{Continual 3D Pose Estimation on Pascal3D+ and Occluded Pascal3D+ using $\pi/6$ accuracy. Methods marked with $^\dagger$ utilize data-replay. \textit{Share} achieves competitive performance with significantly fewer parameters.}
  \label{tab:pose_estimation}
\end{table}
\begin{figure*}[h]
  \centering
  \includegraphics[width=.8\linewidth]{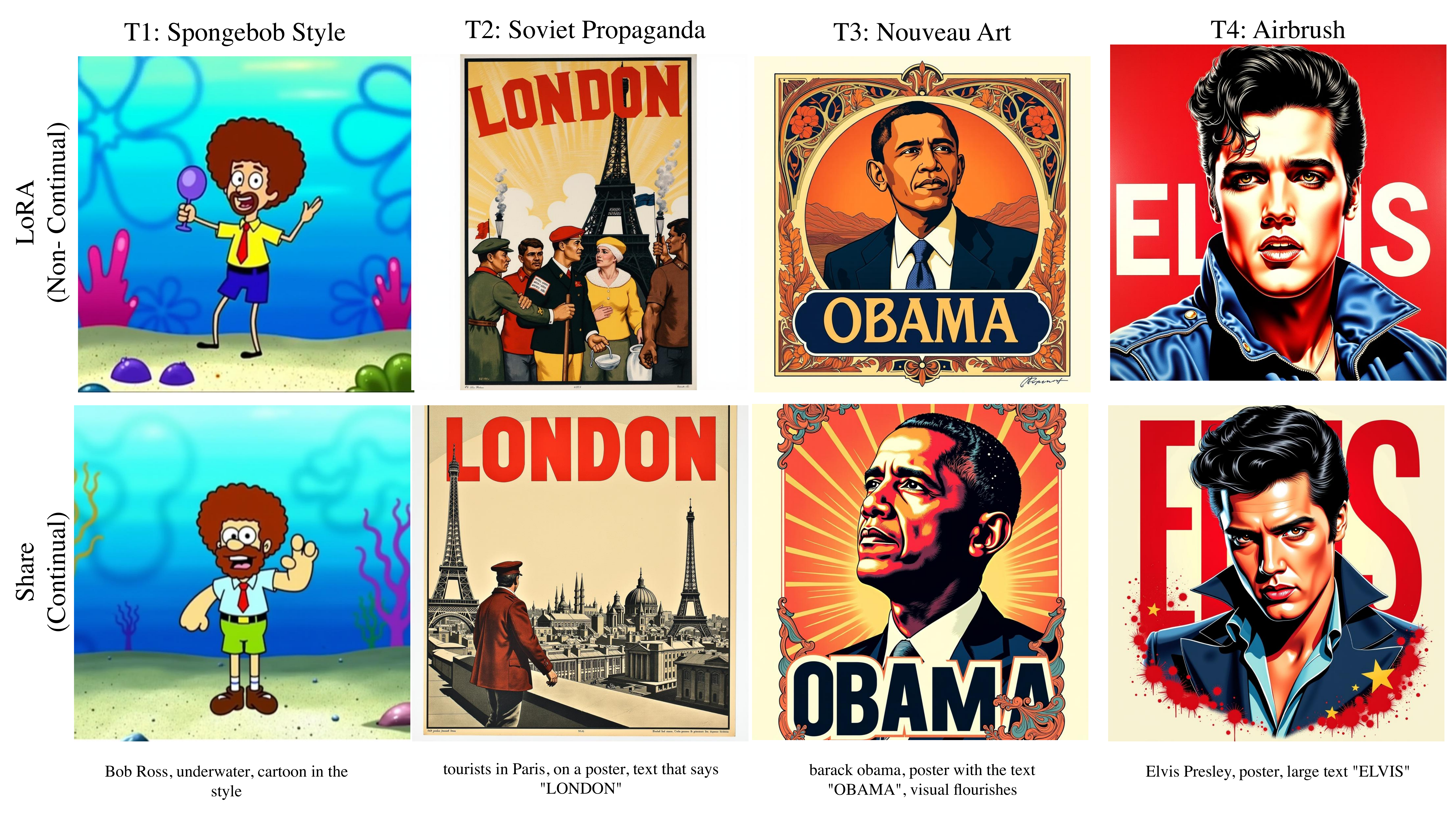} %
  \caption{Comparing continually finetuned \textit{Share} results with individual LoRAs on different tasks for text-to-image generation}
  \label{fig:Diffusion}
\end{figure*}

\paragraph{Setup}
We evaluate Share on continual 3D pose estimation~\cite{fischer2024inemoincrementalneuralmesh} using Pascal3D+~\cite{xiang_wacv14} (natural images of 12 rigid object categories) and Occluded Pascal3D+~\cite{kaushik2024sourcefree,Yuan_2025_ICCV} (introducing progressive occlusion levels L1-L3). This task requires estimating 3D rotation parameters (azimuth, elevation, and in-plane rotation) without explicit 3D information. Following~\cite{fischer2024inemoincrementalneuralmesh}, we measure accuracy using the $\pi/6$ metric—the percentage of predictions where the geodesic distance between predicted and ground truth rotation matrices is below $\pi/6$ radians. Performance is assessed across four occlusion levels (L0-L3) using a pretrained ResNet50 backbone. Share operates with only 1M parameters ($k=16$, $\varphi=4$) under strictly replay-free conditions, compared to baseline methods requiring 25M parameters and exemplar storage.\\

\noindent
\textbf{Analysis}
\cref{tab:pose_estimation} reveals Share's superior performance across all occlusion levels, consistently outperforming replay-based methods including iNeMO despite operating without exemplar storage. This demonstrates Share's ability to maintain effective geometric representations under visual degradation through its foundational subspace approach. The 96\% parameter reduction while achieving enhanced occlusion robustness establishes Share as an efficient solution for real-world deployment scenarios where objects are frequently partially occluded. These results extend parameter-efficient continual learning beyond classification to complex spatial reasoning tasks, preserving geometric relationships across task boundaries without compromising performance.
\begin{table}[htb]
  \centering
  \small
  \renewcommand{\arraystretch}{1.1}
  \setlength{\tabcolsep}{7pt}

  \begin{tabular}{@{} l *{4}{S[table-format=2.3]} @{}}
    \toprule
    \textbf{Method} & \textbf{T1} & \textbf{T2} & \textbf{T3} & \textbf{T4} \\
    \midrule
    \textit{Share} (CL) & 23.227 & {--} & {--} & {--} \\
    & 23.490 \scalebox{0.7}{\color{ForestGreen}{$\uparrow$.26}} & 22.374 & {--} & {--} \\
    & 23.281 \scalebox{0.7}{\color{red}{$\downarrow$.21}} & 22.699 \scalebox{0.7}{\color{ForestGreen}{$\uparrow$.32}} & 24.121 & {--} \\
    & \bfseries 23.382 \scalebox{0.7}{\color{ForestGreen}{$\uparrow$.10}} & 22.400 \scalebox{0.7}{\color{red}{$\downarrow$.30}} & \bfseries 24.436 \scalebox{0.7}{\color{ForestGreen}{$\uparrow$.31}} & \bfseries 21.432 \\
    \midrule
    LoRA (non-CL) & 22.776 & 22.985 & 24.079 & 21.089 \\
    \bottomrule
  \end{tabular}
  \caption{CLIP scores across sequential tasks (T1–T4) for continual text-to-image generation. Results are the mean of 3 random seeds. $\uparrow$ and $\downarrow$ indicate backward transfer and forgetting relative to the previous step, respectively. \textit{Share} demonstrates significant backward knowledge transfer ($p < 0.01$).}
  \label{tab:clip_scores}
\end{table}
\begin{table*}[!htb]
  \centering
  \renewcommand{\arraystretch}{1.2} %
  \setlength{\tabcolsep}{4pt} %
  \arrayrulecolor{black} %
  \resizebox{\textwidth}{!}{%
    \begin{tabular}{lccccccccccc}
      \toprule
      & \textbf{T1} & \textbf{T2} & \textbf{T3} & \textbf{T4} & \textbf{T5} & \textbf{T6} & \textbf{T7} & \textbf{T8} & \textbf{T9} & \textbf{T10} & \textbf{T11} \\
      \midrule
      \textbf{T-0}  & \cellcolor{highlight} 83.70 (\textbf{1.00}) &  &  &  &  &  &  &  &  &  &  \\
      \textbf{T-1}  & 82.42 (0.98) & \cellcolor{highlight} 63.34 (\textbf{1.00}) &  &  &  &  &  &  &  &  &  \\
      \textbf{T-2}  & 79.46 (0.94) & 62.29 (0.98) & \cellcolor{highlight} 86.96 (\textbf{1.00}) &  &  &  &  &  &  &  &  \\
      \textbf{T-3}  & 77.03 (0.92) & 62.25 (0.98) & 86.12 (0.99) & \cellcolor{highlight} 93.69 (\textbf{1.00}) &  &  &  &  &  &  &  \\
      \textbf{T-4}  & 76.58 (0.91) & 62.09 (0.98) & 85.87 (0.98) & 93.25 (0.99) & \cellcolor{highlight} 75.89 (\textbf{1.00}) &  &  &  &  &  &  \\
      \textbf{T-5}  & 76.16 (0.90) & 61.88 (0.97) & 85.37 (0.98) & 93.19 (0.99) & 75.92 (1.00) & \cellcolor{highlight} 45.64 (\textbf{1.00}) &  &  &  &  &  \\
      \textbf{T-6}  & 76.57 (0.91) & 61.66 (0.97) & 84.98 (0.98) & 92.98 (0.99) & 76.00 (1.00) & 45.66 (1.00) & \cellcolor{highlight} 58.27 (\textbf{1.00}) &  &  &  &  \\
      \textbf{T-7}  & 75.85 (0.90) & 61.73 (0.97) & 84.73 (0.97) & 92.64 (0.99) & 73.61 (0.97) & 45.65 (1.00) & 57.78 (0.99) & \cellcolor{highlight} 80.68 (\textbf{1.00}) &  &  &  \\
      \textbf{T-8}  & 75.87 (0.91) & 61.68 (0.97) & 84.75 (0.97) & 92.12 (0.98) & 73.61 (0.97) & 45.67 (1.00) & 54.43 (0.93) & 80.20 (0.99) & \cellcolor{highlight} 52.84 (\textbf{1.00}) &  &  \\
      \textbf{T-9}  & 75.34 (0.90) & 61.65 (0.97) & 84.67 (0.97) & 92.10 (0.98) & 73.68 (0.97) & 45.62 (0.99) & 54.89 (0.94) & 79.99 (0.99) & 51.86 (0.98) & \cellcolor{highlight} 80.10 (\textbf{1.00}) \\
      \textbf{T-10} & \cellcolor{highlight}76.23 \textbf{(0.91)} & \cellcolor{highlight}61.52 \textbf{(0.97)} &\cellcolor{highlight} 84.67 \textbf{(0.97)} &\cellcolor{highlight} 91.52 \textbf{(0.97)} &\cellcolor{highlight} 72.07 \textbf{(0.95)} & \cellcolor{highlight} 45.62 \textbf{(0.99)} & \cellcolor{highlight} 56.14 \textbf{(0.96)} & \cellcolor{highlight}80.14 \textbf{(0.99)} &\cellcolor{highlight} 51.45 \textbf{(0.97)} & \cellcolor{highlight}80.18 \textbf{(1.00)} & \cellcolor{highlight} 42.71 (\textbf{1.00}) \\
      \bottomrule
    \end{tabular}%
  }
  \caption{\textbf{Continual Learning with Lots of LoRAs.} We report the absolute and relative Rouge-L scores at each time-step. Diagonal elements (1.00) are highlighted for clarity.}
  \label{tab:LoLA}
\end{table*}

\subsection{Text to Image Generation}
\label{ssec:flux}
In this experiment, we demonstrate the \textit{Share}'s ability to effectively manage complex tasks and multimodal architectures, such as Text-to-Image Models. As far as we are aware, ours is the one of the earliest work for doing Parameter Efficient Continual Learning for such models without requiring data replay or a mixture of experts like setup.\\
\textbf{Setup} We design this experiment as a hybrid task. We choose Flux~\cite{flux} as our pretrained text-to-image generation model. We choose $4$ continual tasks represented by LoRA adapters or associated prompt-image data. For comparison, we compare with a non-CL LoRA. For initializing \textit{Share}, we utilize the publicly available LoRA adapter from the HuggingFace community library~\cite{von-platen-etal-2022-diffusers}. Hyperparameter: LoRA $r=32$, \textit{Share} $k=32, p=8$ and the pretrained model is Flux~\cite{flux}. \\
\textbf{Analysis}
In the absence of quantitative measures, we present qualitative results for individual LoRAs and continual \textit{Share} in \cref{fig:Diffusion}. The CL tasks span various art styles, with additional results and details in the Appendix. Our findings demonstrate that continual training with \textit{Share} achieves performance comparable to individual LoRAs. Notably, we observe up to a 20× reduction in model size compared to 20 LoRAs, with further reductions as the number of tasks increases. Crucially, a single set of \textit{Share} factors, equal to one adapter, is used for all tasks in \cref{fig:Diffusion}.

\subsection{Continual Asynchronous Learning and Serving of LoRAs at Scale}
\label{ssec:scale}
\paragraph{Setup} We introduce a novel setup for Continual Model Merging and Learning at Scale, inspired by the continual model merging~\cite{marczak2024magmax} and LoRA scaling~\cite{sheng2023s} literature. This setup enables efficient large-scale model serving and personalization. We utilize 500 publicly available LoRAs~\cite{brüelgabrielsson2024compressserveservingthousands} and conduct a continual learning experiment with 50 LoRAs arriving incrementally, updating our \textit{Share} model. We evaluate \textit{Share}’s performance across 2 randomly sampled in-distribution (IID) tasks for each LoRA incremental task batch and 9 out-of-distribution (OOD) tasks (Appendix \cref{tab:LoLA_OOD}) and compare the performance with non-continual model merging method TIES \cite{yadav2024ties}. Here, LoRA $r=16$, \textit{Share} $k=32$, $p=8$ and model is Mistral-7b~\cite{jiang2023mistral7b}. \\
\textbf{Analysis}
\cref{tab:LoLA} demonstrates our method's performance in a strictly continual, zero-shot setting. Results show robust knowledge retention, with most tasks maintaining 90-99\% of original performance across ten sequential learning episodes. \cref{tab:LoLA_OOD} reveals Share's effectiveness on out-of-distribution tasks, achieving an average Rouge-L score of 55.89 across nine OOD tasks—significantly outperforming the non-continual TIES~\cite{yadav2024ties}  while approaching individual LoRAs (73.75). Notably, Share maintains 71-95\% relative performance on most OOD tasks despite using a single set of shared factors rather than task-specific adapters, demonstrating its robust generalization capabilities.
\cref{fig:scale} illustrates Share's efficiency-performance trade-off across both in-distribution and out-of-distribution scenarios. The factor subspace quality improves incrementally with each adapter integration (\cref{fig:icremental-comps}), becoming increasingly representative with diverse knowledge.

A single set of Share factors can compress hundreds of task-specific adapters, yielding \textbf{96× memory savings}. This enables scenarios like large-scale model serving with personalization capabilities. Users can efficiently finetune only lightweight coefficient parameters. \textit{Share} accomplishes this compression through a \textit{data and gradient-free} analytical process, eliminating computational burden typically associated with model merging or continual learning, and facilitating asynchronous, distributed learning paradigms.

\section{Conclusion}
\label{sec:conclusion}
This paper introduces \textit{\textit{Share}}, a novel parametric efficient continual finetuning framework that addresses the challenges of both \textit{efficiently and continually} finetuning large-scale pretrained models. By incrementally learning and updating a shared foundational subspace, \textit{Share} enables the incremental acquisition of new knowledge by a single model without catastrophic forgetting. This approach offers significant advantages in terms of parameter efficiency, memory footprint, and computational cost, and also allows an avenue of reusing community resources (LoRA adapters). Our empirical evaluation demonstrates the effectiveness of \textit{Share} across diverse tasks and large pretrained models, including image classification, natural language understanding, and text-to-image generative models. Notably, we show 7 sets of experiments for computer vision and language tasks, showcasing the comprehensive nature of our evaluation and analysis. 
\textit{Share} could have a positive societal and environmental impact since it enables continual learning for large pretrained models, reduces resource use, and empowers ordinary users and researchers with limited compute resources, to finetune large models continually. Future work aims to extend \textit{Share} to support learning from scratch. 

{
    \small
    \bibliographystyle{ieeenat_fullname}
    \bibliography{main}
}
\clearpage
\setcounter{page}{1}
\maketitlesupplementary

\section{FAQs}
\begin{enumerate}
  \item \textbf{Can Share be initialized without a large number of low-rank adapters?}\\
    Yes. While Share leverages the shared common subspace hypothesis, it can effectively discover and refine this subspace incrementally through continual learning, as demonstrated theoretically in \cref{ssec:theory} and empirically in our experiments. The majority of our strict continual learning evaluations begin with just a single LoRA adapter, from which Share progressively constructs a more comprehensive foundational subspace as additional tasks are encountered.

  \item \textbf{How does Share perform with lower-quality LoRA adapters?}\\
    Share demonstrates robustness to adapter quality variations when task-specific data is available. In such scenarios, Share can leverage its evolving foundational subspace to potentially outperform the original LoRA adapters through cross-task knowledge transfer. However, in purely adapter-based scenarios without accompanying data, the quality of the initial adapters establishes an information bottleneck that constrains Share's performance ceiling for the corresponding tasks.

  \item \textbf{What guided the experimental design across diverse domains?}\\
    Our experimental design prioritizes demonstrating Share's versatility and cross-domain applicability rather than exhaustive evaluation within any single domain. This approach validates Share as a general-purpose solution for parameter-efficient continual fine-tuning across vision, language, and multimodal tasks. We deliberately included challenging scenarios (e.g., text-to-image generation, 3D pose estimation) alongside more standard benchmarks to establish Share's broad utility. The diversity of experiments also underscores Share's practical efficiency, as all evaluations were conducted within reasonable computational constraints.

  \item \textbf{Why benchmark primarily against non-continual LoRA rather than other continual learning methods?}\\
    We establish non-continual LoRA as our primary performance reference because it represents the theoretical upper bound for task-specific adaptation without forgetting constraints. The fact that Share approaches this upper bound while operating under strict continual learning conditions highlights its exceptional effectiveness. Furthermore, the current landscape of parameter-efficient continual learning methods is highly domain-specific—existing approaches typically target individual domains (e.g., image classification or language modeling) and lack the cross-domain versatility that Share provides. Implementing multiple domain-specific baselines would not only increase computational requirements but would contradict the core efficiency principles that motivate our work.

  \item \textbf{Why not integrate conventional continual learning algorithms with LoRA?}\\
    Conventional continual learning algorithms were primarily designed for full-parameter fine-tuning scenarios and may not transfer optimally to the low-rank adaptation paradigm without substantial modifications. The parameter geometry and optimization dynamics differ significantly between full-model and low-rank adaptation settings. Additionally, many established continual learning techniques (e.g., regularization-based approaches) impose computational overheads that would undermine the efficiency benefits of parameter-efficient tuning. While integrating these approaches with LoRA represents an interesting research direction, it falls outside the scope of our current investigation, which focuses on developing a natively parameter-efficient continual learning framework.

\end{enumerate}
\section{Share: Shared Subspace Adaptation}\footnote{We will release Share code which is compatible with HuggingFace PeFT library and a tutorial video here: https://anonymous.4open.science/r/Share-8FF2/}
We provide the detailed algorithm for our method, \textit{Share}, in \cref{algo:share}
\begin{figure}[!htb]
  \centering
  \includegraphics[width=\columnwidth]{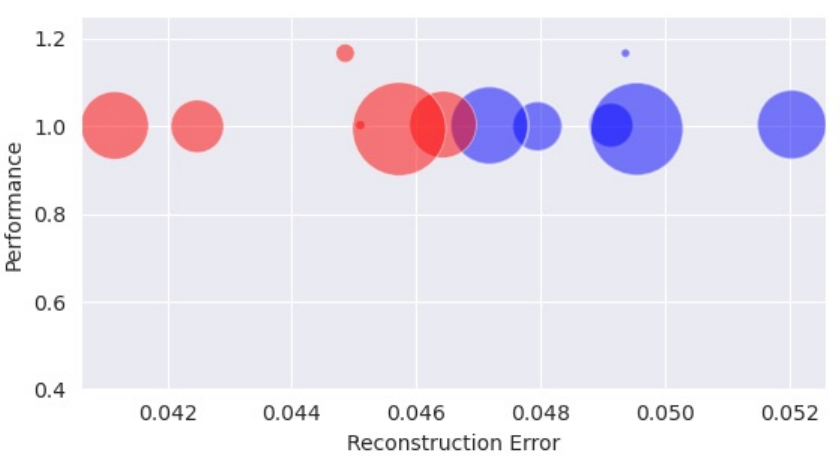}
  \caption{\textit{Low Rank Adapters share a foundational subspace}. We evaluate the \textit{Share}-full model's performance against reconstruction error after finetuning on the GLUE benchmark. Compared with non-continuously trained LoRA submatrices (A and B, shown in red and blue colors), results show that \textit{Share}'s foundational subspace efficiently approximates all LoRAs, suggesting a shared subspace. The radius of the circles represent scaled up standard deviation of the reconstruction error}
  \label{fig:pca}
\end{figure}
\begin{figure}[!htb]
  \centering
  \includegraphics[width=\columnwidth]{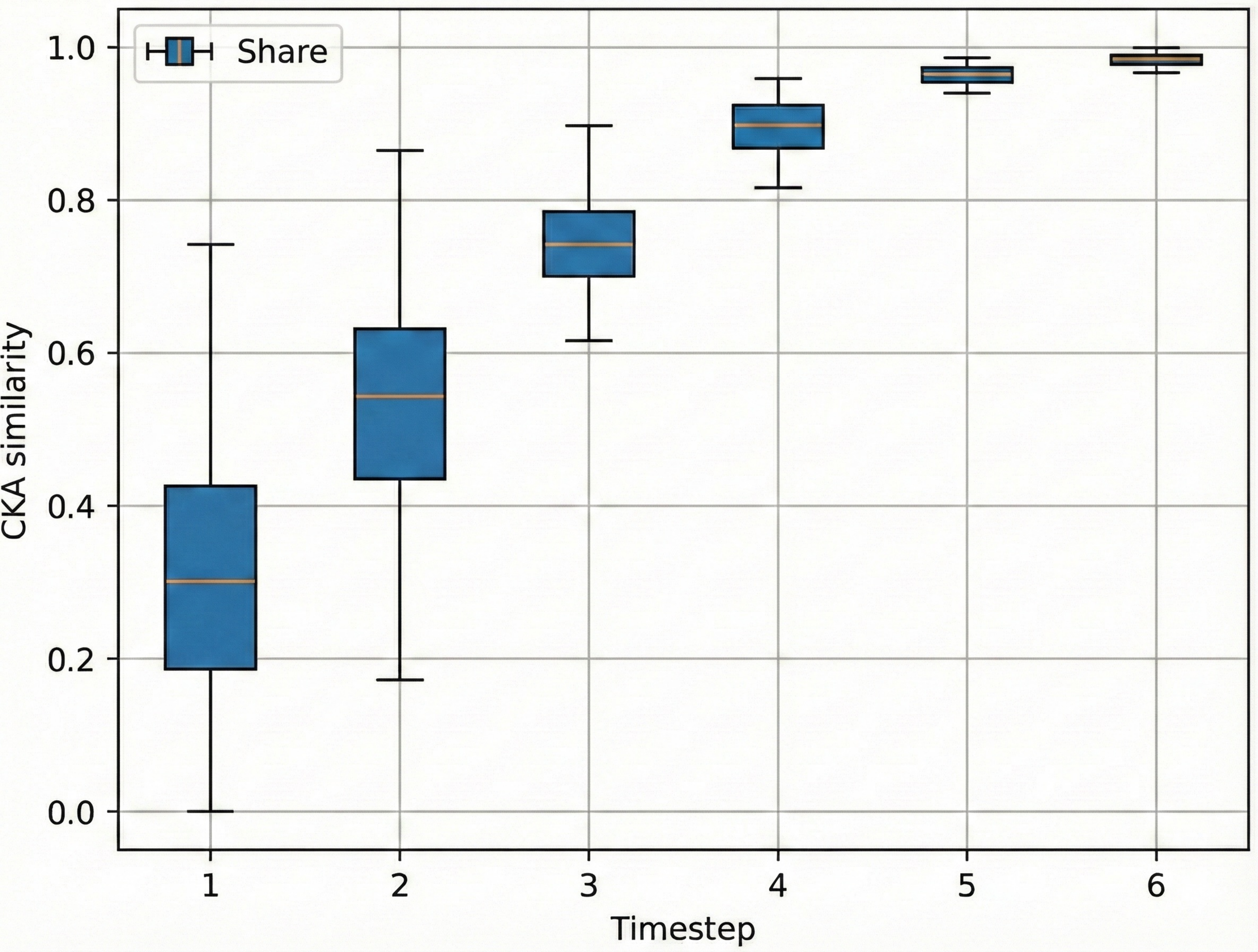} %
  \caption{\textbf{Progression of Factor Subspace of \textit{Share}}. The figure shows CKA similarity \cite{kornblith2019similarityneuralnetworkrepresentations} between \textit{Share}'s final and intermediate factors in the Continual GLUE experiments (\cref{ssec:glue}). \textit{Share} effectively incorporates new factors from incoming data, shown by increased similarity over time (circle size represents variance), while preserving and converging towards optimal principal factors.}
  \label{fig:icremental-comps}
\end{figure}

\begin{algorithm*}[!htb]
  \caption{\textbf{Share: Parameter-Efficient Continual Finetuning via Shared Subspace Adaptation}}
  \label{algo:share}
  \begin{algorithmic}[1]
    \State {\bfseries Input:} LoRA adapters $\{\Delta W^t = (A_t, B_t)\}_{t=1}^T$ where $B_t \in \mathbb{R}^{n\times r}$, $A_t \in \mathbb{R}^{r\times d}$, or task data $\{x^t\}_{t=1}^T$, hyperparameters: principal factors $k$, temporary factors $\varphi$, pseudo-rank $p$
    \State {\bfseries Output:} Principal factors $\alpha^T, \beta^T$, task coefficients $\{\epsilon^t\}_{t=1}^T$ \\

    \State {\bfseries Initialization:} \Comment{Extract foundational subspace}
    \If{$N \geq 1$ LoRA adapters available}
    \State Extract rank vectors: $D_A = [a_{11}, \ldots, a_{Nr}]^\top \in \mathbb{R}^{Nr \times d}$, $D_B = [b_{11}, \ldots, b_{Nr}]^\top \in \mathbb{R}^{Nr \times n}$
    \State Center matrices: $D_A \leftarrow D_A - \bar{D}_A$, $D_B \leftarrow D_B - \bar{D}_B$
    \State Compute SVD: $D_A = U_A\Sigma_A V_A^\top$, $D_B = U_B\Sigma_B V_B^\top$
    \State Extract factors: $\alpha^0 = V_A[:, 1:k] \in \mathbb{R}^{d \times k}$, $\beta^0 = V_B[:, 1:k] \in \mathbb{R}^{n \times k}$
    \Else
    \State Train initial LoRA on first task data and proceed as above
    \EndIf
    \State Initialize coefficients: $\epsilon_\alpha^0, \epsilon_\beta^0 \sim \mathcal{N}(0,\sigma^2) \in \mathbb{R}^{k \times p}$\\

    \For{$t = 1$ to $T$} \Comment{Continual learning process}
    \State $\mathcal{A}^{t-1} = \{\alpha^{t-1}, \beta^{t-1}, \{\epsilon^i\}_{i=1}^{t-1}\}$ \Comment{Current knowledge state}

    \If{receive task data $x^t$}
    \State {\bfseries Continual Adaptation:} \Comment{Temporary expansion of subspace}
    \State $\beta^{t-1\rightarrow t} = \beta^{t-1}[:, 1:\varphi]$, $\alpha^{t-1\rightarrow t} = \alpha^{t-1}[:, 1:\varphi]$ \Comment{Select top factors}
    \State $\epsilon^{t-1\rightarrow t}_{\alpha}, \epsilon^{t-1\rightarrow t}_{\beta} \sim \mathcal{N}(0,\sigma^2) \in \mathbb{R}^{\varphi \times p}$ \Comment{Initialize coefficients}
    \State Forward pass: $h = W_0 x + (\beta^{t-1\rightarrow t}\epsilon_{\beta}^{t-1\rightarrow t})(\alpha^{t-1\rightarrow t}\epsilon_{\alpha}^{t-1\rightarrow t})^\top x$

    \State Optimize $\{\beta^{t-1\rightarrow t}, \alpha^{t-1\rightarrow t}, \epsilon^{t-1\rightarrow t}_{\alpha, \beta}\}$ on task data $x^t$
    \State $\hat{A}_t = ({\alpha^{t-1\rightarrow t}\epsilon_{\alpha}^{t-1\rightarrow t}})^\top $, $\hat{B}_t = \beta^{t-1\rightarrow t}{\epsilon_{\beta}^{t-1\rightarrow t}}$ \Comment{Task adapter}
    \ElsIf{receive LoRA adapter $\Delta W^t$}
    \State $\hat{A}_t = A_t$, $\hat{B}_t = B_t$ \Comment{Direct adapter integration}
    \EndIf

    \State {\bfseries Merging:} \Comment{Knowledge integration}
    \State Reconstruct previous task adapters using current factors:
    \For{$i = 1$ to $t-1$}
    \State $\hat{A}^i = (\alpha^{t-1}{\epsilon_{\alpha}^{i}})^\top$, $\hat{B}^i = \beta^{t-1}{\epsilon_{\beta}^{i}}$
    \EndFor

    \State Construct factor data matrices from all adapters $\{\hat{A}^i, \hat{B}^i\}_{i=1}^t$
    \State $D_A^t = [\text{vec}(\hat{A}^1), \ldots, \text{vec}(\hat{A}_t)]^\top$, $D_B^t = [\text{vec}(\hat{B}^1), \ldots, \text{vec}(\hat{B}_t)]^\top$
    \State Center and perform SVD: $D_A^t = U_A^t\Sigma_A^t {V_A^t}^\top$, $D_B^t = U_B^t\Sigma_B^t {V_B^t}^\top$
    \State Update factors: $\alpha^t = V_A^t[:, 1:k]$, $\beta^t = V_B^t[:, 1:k]$ \Comment{New principal subspace}

    \State {\bfseries Coefficient Recalculation:} \Comment{Analytical knowledge preservation}
    \For{$i = 1$ to $t$}
    \State $\epsilon_{\alpha}^i = {\alpha^{t}}^\top \hat{A}^i$, $\epsilon_{\beta}^i = {\beta^{t}}^\top \hat{B}^i$ \Comment{Project onto new subspace}
    \EndFor

    \State {\bfseries Finetuning:} \Comment{Optional performance enhancement}
    \State Optimize task-specific coefficients $\epsilon^t$ on task data (if available)
    \EndFor
  \end{algorithmic}
\end{algorithm*}

\subsection{Theoretical Analysis}\label{sec:theory_supp}
We restate the notation and proposition from \cref{ssec:theory} here.

\setcounter{nTheorems}{0}
\begin{proposition}
  \label{thm:inc_supp}
  \textbf{(Restating Incremental Subspace Error Bound)}: Let $\mathcal{D}^{t} = [D_1, D_2, \ldots, D_t] \in \mathbb{R}^{N_t \times d}$ be cumulatively stacked weight matrix up to task $\tau_t$, where $N_t = \sum_{i=1}^t n_i$.
  Using the Share approximation at task $t$ with $k$ principal basis vectors we get $\hat{\mathcal{D}}^{t}$ be the rank-$k$ SVD of $\mathcal{D}^{t}$ such that $\hat{\mathcal{D}}^{t}=\text{SVD}(\mathcal{D}^{t})_{[:k]}$.
  Then the reconstruction error satisfies:
  $$\|\mathcal{D}^{t} - \hat{\mathcal{D}}^{t}\|_F^2 = \sum_{i=k+1}^{\min(N_t, d)} (\sigma_i^{(t)})^2,$$ where $\sigma_i$'s are the singular values of $\mathcal{D}^{t}$ for the non-principal basis vectors. See proof in appendix \cref{sec:theory_supp}.

  Furthermore, if tasks are drawn from a distribution with bounded covariance $\mathbb{E}[\|D^{(t)} - \mu\|_F^2] \leq \sigma_{task}^2$, then the expected error growth is bounded.
\end{proposition}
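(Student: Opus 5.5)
The first claim is the Eckart--Young--Mirsky theorem applied to $\mathcal{D}^t$. I will write the full SVD $\mathcal{D}^t = \sum_{i=1}^{m}\sigma_i^{(t)} u_i v_i^\top$ with $m=\min(N_t,d)$ and orthonormal $\{u_i\}$, $\{v_i\}$. Then $\hat{\mathcal{D}}^t=\text{SVD}(\mathcal{D}^t)_{[:k]}=\sum_{i\le k}\sigma_i^{(t)}u_iv_i^\top$, so the residual is $\mathcal{D}^t-\hat{\mathcal{D}}^t=\sum_{i>k}\sigma_i^{(t)}u_iv_i^\top$. The rank-one matrices $u_iv_i^\top$ are orthonormal in the Frobenius inner product, since $\langle u_iv_i^\top,u_jv_j^\top\rangle_F=(u_i^\top u_j)(v_i^\top v_j)=\delta_{ij}$. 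By Pythagoras, $\|\mathcal{D}^t-\hat{\mathcal{D}}^t\|_F^2=\sum_{i=k+1}^{m}(\sigma_i^{(t)})^2$. Equivalently, $\hat{\mathcal{D}}^t=\mathcal{D}^tV_kV_k^\top$, and the error equals $\|\mathcal{D}^tV_\perp\|_F^2$. This is the form that connects to the Share factors $\beta^t=U_k$ and coefficients $\Sigma_kV_k^\top$ in \cref{eq:factor_update}. I will also note, though it is not required for the stated equality, that this truncation is optimal among all rank-$k$ approximations.

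For the ``furthermore'' part, I first have to make the vague claim precise. I will show that the per-task average of the expected reconstruction error stays bounded uniformly in $t$, so the error grows at most linearly in $N_t$. Take $D_1,\dots,D_t$ i.i.d.\ with mean $\mu$ and $\mathbb{E}\|D_i-\mu\|_F^2\le\sigma_{task}^2$. Since $\hat{\mathcal{D}}^t$ is the best rank-$k$ approximation, its error is at most that of any rank-$k$ competitor. I will choose the competitor whose rows come from the stacked mean $[\mu;\dots;\mu]$ projected onto a fixed rank-$k$ subspace, or, if $\mathrm{rank}(\mu)\le k$ (as for LoRA factors with $n_i=r\le k$), the stacked mean itself. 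This gives
$$\|\mathcal{D}^t-\hat{\mathcal{D}}^t\|_F^2\le\sum_{i=1}^t\|D_i-\mu\|_F^2 .$$
Taking expectations yields $\mathbb{E}\|\mathcal{D}^t-\hat{\mathcal{D}}^t\|_F^2\le t\,\sigma_{task}^2$, so the average error per task is at most $\sigma_{task}^2$. If $\mathrm{rank}(\mu)>k$, I add the deterministic term $t\sum_{j>k}\sigma_j(\mu)^2$.

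The main obstacle is the rank of the stacked mean competitor. Stacking $t$ copies of $\mu$ gives rank equal to $\mathrm{rank}(\mu)$, not $t\cdot\mathrm{rank}(\mu)$, because the row space is unchanged. So the competitor has rank at most $k$ whenever $\mathrm{rank}(\mu)\le k$, and I must state that this is where the assumption enters. A second subtlety is that the algorithm mean-centers before the SVD. Centering is itself a rank-one-in-the-stacking-direction adjustment, so I will either work with uncentered $\mathcal{D}^t$ as the proposition is literally stated, or absorb the centering by using rank $k+1$, and I will say which. Finally, the incremental Share pipeline re-SVDs reconstructed adapters rather than the true $D_i$, so errors compound. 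If I want to extend the bound to that pipeline, I will use the triangle inequality plus the fact that a projection onto the span of earlier projections does not increase the norm. This gives an accumulated bound of the form $\sum_s(\text{tail at step } s)$, which I would present only as a remark.
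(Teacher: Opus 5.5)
Your proof of the identity is the same as the paper's. The paper just cites Eckart--Young--Mirsky for $\|M-M_k\|_F^2=\sum_{i>k}\sigma_i^2$, and your Frobenius-orthonormality computation is what that citation contains.

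For the ``furthermore'' clause you take a genuinely different route. The paper never makes the claim precise; it only asserts that it ``follows from concentration inequalities for random matrices.'' You instead fix a concrete statement and prove it by comparison. You pit the truncated SVD against the rank-$\le k$ competitor formed by stacking $t$ copies of $\mu$ (or of its truncation $\mu_k$), then take expectations. This gives $\mathbb{E}\|\mathcal{D}^t-\hat{\mathcal{D}}^t\|_F^2\le t\bigl(\sigma_{task}^2+\sum_{j>k}\sigma_j(\mu)^2\bigr)$.

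This is correct, and more elementary than what the paper gestures at. It needs no independence and no tail assumptions. It only needs a common shape $n_i=n$, which is implicit once $\mu$ exists. In the case $\operatorname{rank}(\mu)>k$ it also needs $\mu=\mathbb{E}D_i$. There the inequality holds only in expectation, because it relies on the cross term $2\,\mathbb{E}\langle D_i-\mu,\mu-\mu_k\rangle_F$ vanishing; say this explicitly when you write it up.

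A genuine concentration argument would give a high-probability statement rather than an in-expectation one, but only under assumptions the paper never states. Your bound is linear in $t$ with per-task slope $\sigma_{task}^2$, which is consistent with the $T\sigma_{task}^2$ term in the paper's subsequent corollary.

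One caution for your closing remark on the incremental pipeline. The successive per-step residuals are not mutually orthogonal. So the triangle inequality accumulates unsquared Frobenius norms of the tails, not squared ones.
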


\textbf{Proof:} The result follows directly from the Eckart-Young-Mirsky theorem. For any matrix $M \in \mathbb{R}^{m \times n}$ with SVD $M = \sum_{i=1}^r \sigma_i u_i v_i^T$, the best rank-$k$ approximation in Frobenius norm is $M_k = \sum_{i=1}^k \sigma_i u_i v_i^T$, and:
$$\|M - M_k\|_F^2 = \sum_{i=k+1}^r \sigma_i^2$$

Applying this to $D_{cum}^{(t)}$ gives the stated bound. The distributional bound follows from concentration inequalities for random matrices under the assumption of bounded task covariance.

\begin{definition}[Task Similarity]
  Two tasks $i, j$ are $\delta$-similar if their LoRA parameter distributions satisfy:
  $$\mathbb{E}[\|D^{(i)} - D^{(j)}\|_F^2] \leq \delta \cdot \max(\mathbb{E}[\|D^{(i)}\|_F^2], \mathbb{E}[\|D^{(j)}\|_F^2])$$
\end{definition}

\setcounter{nTheorems}{0}
\begin{corollary}[Bounded Error Growth]
  Under $\delta$-similarity conditions with $\delta < 1$, the Share approximation error grows sub-linearly in the number of tasks:
  $$\mathbb{E}[\|D_{cum}^{(T)} - \hat{D}_{cum}^{(T)}\|_F^2] \leq C_k + \delta \cdot T \cdot \sigma_{task}^2$$
  where $C_k = \sum_{i=k+1}^d \sigma_i^2$ is the truncation error and $\sigma_{task}^2$ bounds task variance.
\end{corollary}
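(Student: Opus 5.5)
The plan is to combine the Eckart--Young--Mirsky characterization from Proposition~\ref{thm:inc_supp} with a comparison argument. Since $\hat{D}_{cum}^{(T)}$ is the \emph{best} rank-$k$ approximation of $D_{cum}^{(T)}$, its error is at most that of any rank-$k$ candidate I choose. So I never need the singular values of the random stacked matrix explicitly. I only need a cleverly chosen rank-$k$ competitor whose error I can bound in expectation.

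\textbf{Step 1: split off a common component.} Write each task block as $D^{(t)} = \mu + \Delta_t$, with $\mu$ the task mean and $\mathbb{E}[\Delta_t]=0$. Then
\[
D_{cum}^{(T)} = M_\mu + \Delta, \qquad M_\mu = [\mu;\dots;\mu], \qquad \Delta=[\Delta_1;\dots;\Delta_T].
\]
The row space of $M_\mu$ equals that of $\mu$. Let $P_k$ be the orthogonal projector onto the top-$k$ right singular subspace of $M_\mu$; this is deterministic. Take as competitor $D_{cum}^{(T)}P_k$, which has rank at most $k$.

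\textbf{Step 2: expand the competitor's error.} We have
\[
\|D_{cum}^{(T)}(I-P_k)\|_F^2 = \|M_\mu(I-P_k)\|_F^2 + 2\langle M_\mu(I-P_k),\Delta(I-P_k)\rangle + \|\Delta(I-P_k)\|_F^2 .
\]
Because $P_k$ is deterministic and $\mathbb{E}[\Delta_t]=0$, the cross term vanishes in expectation. The first term is exactly the truncation tail $\sum_{i>k}\sigma_i^2$ of the common component, which I identify with $C_k$. The last term is at most $\sum_{t=1}^T \mathbb{E}\|\Delta_t\|_F^2$, since $I-P_k$ is a contraction.

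\textbf{Step 3: bring in $\delta$-similarity.} For independent draws $i\neq j$, centering gives $\mathbb{E}\|D^{(i)}-D^{(j)}\|_F^2 = \mathbb{E}\|\Delta_i\|_F^2+\mathbb{E}\|\Delta_j\|_F^2$. Combining this with the $\delta$-similarity definition and the bounded-covariance assumption from Proposition~\ref{thm:inc_supp} bounds each $\mathbb{E}\|\Delta_t\|_F^2$ by a multiple of $\delta\,\sigma_{task}^2$. Summing over $t$ yields the $\delta\, T\,\sigma_{task}^2$ term, which gives the stated inequality up to that multiplicative constant.

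\textbf{Expected obstacle: constants and normalization.} First, $C_k$ must be read as the tail of the common (mean) component. The tail of the stacked mean scales with $T$ unless the normalization is chosen so that $C_k$ is defined for $M_\mu$ itself; I would fix this by defining $C_k$ that way. Second, $\delta$-similarity controls $\mathbb{E}\|\Delta_t\|_F^2$ relative to $\max\mathbb{E}\|D\|_F^2$, not directly relative to $\sigma_{task}^2$. I would first try to absorb the ratio into the constant, or to state the bound with $\max\mathbb{E}\|D\|_F^2$ in place of $\sigma_{task}^2$. Third, the resulting bound is affine in $T$ with slope $\delta\sigma_{task}^2$, so "sub-linear" should be read as "per-task growth damped by $\delta<1$" rather than literally sub-linear.
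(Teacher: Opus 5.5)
\textbf{Verdict: genuine gap, and not one a different argument can fill, because the corollary as stated is false.} Your comparison argument in Steps~1--2 is sound. It is also more than the paper provides: the paper gives no derivation of this corollary, only the remark after Proposition~1 that the distributional bound ``follows from concentration inequalities for random matrices.''

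\textbf{The gap is Step~3.} For independent tasks sharing the mean $\mu$, $\delta$-similarity gives
$\mathbb{E}\|\Delta_t\|_F^2\le \mathbb{E}\|\Delta_t\|_F^2+\mathbb{E}\|\Delta_j\|_F^2\le \delta\max_s\mathbb{E}\|D^{(s)}\|_F^2$,
and $\max_s\mathbb{E}\|D^{(s)}\|_F^2\ge\|\mu\|_F^2$. Nothing yields $\mathbb{E}\|\Delta_t\|_F^2\le c\,\delta\,\sigma_{task}^2$. Since $\|\mu\|_F^2/\sigma_{task}^2$ is unbounded, you cannot absorb the ratio into a constant.

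No repair exists, because the inequality fails. Take $\mu=s\,ue_1^\top$ with $\|u\|=1$; it has rank one, so any mean-based tail $C_k$ is $0$ for $k\ge1$. Take i.i.d.\ zero-mean Gaussian noise $\Delta_t$ with $\Delta_te_1=0$, $\mathbb{E}\|\Delta_t\|_F^2=\sigma_{task}^2$, and dimensions large enough that the stacked $\Delta$ has rank $>k$.
\begin{itemize}
\item The tasks are $\delta$-similar with $\delta=2\sigma_{task}^2/(s^2+\sigma_{task}^2)$, which tends to $0$ as $s\to\infty$.
\item Since $D_{cum}^{(T)}(I-e_1e_1^\top)=\Delta$, we get $\sigma_i(D_{cum}^{(T)})\ge\sigma_i(\Delta)$ for all $i$.
\item So the expected error is at least $\mathbb{E}\sum_{i>k}\sigma_i^2(\Delta)>0$, independently of $s$.
\end{itemize}
The right-hand side tends to $0$ while the left-hand side does not.

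\textbf{The $C_k$ problem is also not a normalization choice.} Take identical deterministic tasks $D^{(t)}=\mu$ with $\operatorname{rank}\mu>k$, so $\delta=0$ and $\sigma_{task}=0$ are admissible. The error is then exactly $T\sum_{i>k}\sigma_i^2(\mu)$. Consequently:
\begin{itemize}
\item the single-task tail fails for every $T\ge2$;
\item any $T$-independent constant fails for large $T$;
\item your redefinition (the tail of $M_\mu$) makes $C_k$ itself linear in $T$;
\item the remaining reading, $C_k$ equal to the tail of $D_{cum}^{(T)}$, reduces the claim to Proposition~1 with no content about growth.
\end{itemize}

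\textbf{What your argument actually proves.} Once independence, a common mean, and $T\ge2$ are made explicit, you get
\[
\mathbb{E}\|D_{cum}^{(T)}-\hat D_{cum}^{(T)}\|_F^2\le T\sum_{i>k}\sigma_i^2(\mu)+T\min\Bigl(\sigma_{task}^2,\;\delta\max_s\mathbb{E}\|D^{(s)}\|_F^2\Bigr),
\]
which is linear in $T$. That is the right statement to prove. The corollary's combination of a fixed truncation constant, a $\delta\sigma_{task}^2$ slope, and sub-linear growth cannot be established by any route.
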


This theorem demonstrates that updating the foundational subspace via Share can approximate the subspace of the full dataset, provided incremental data exhibits similarity to prior data.

\setcounter{nTheorems}{0}
\begin{theorem}\label{theorem:upper_bound_supp} (Restating Theorem \ref{theorem:upper_bound})
  Let $\hegn\in\mathcal{H}_k : \hegn=h(\Wegn,x)$ denote rank-k ERM while finetuning for a new task $\tau_{t+1}$ with true solution $\htrue\in \mathcal{H}:\htrue=h(\Wtrue,x)$ of rank-$n_{t+1}$. Here $\epsilon$ with rank-$k$ is trainable and $\Vk$ are fixed orthonormal basis vectors from $\text{SVD}(\mathcal{D}^t)_{[:k]}$. Given a Lipschitz continuous loss ($\loss(\cdot)$) that is strong convex over the weight spaces spanned by $D$ and $\Vk$ with some Lipschitz constant ($L$), we say with probability at least $1-4\delta$ for some $\delta>0$,
  \begin{equation}
    \label{equation:bound_baseline_supp}
    \Vert \Wtrue - \Wnew\lVert^2_F \leq \sqrt{\frac{C_1n_{t+1}}{\size}}+\sqrt{\frac{2\ln(1/\delta)}{\size}} + C_2
  \end{equation}
  \begin{equation}
    \label{equation:bound_ours_supp}
    \Vert \Wtrue - \Wegn \lVert^2_F \leq C_1\sqrt{\frac{k}{\size}}+\sqrt{\frac{2\ln(1/\delta)}{\size}} + \singularsum + C_2
  \end{equation}
  $\sigma_{k+1}, \dots, \sigma_{n_{t+1}}$ denote singular values of $\Wtrue V_{\perp}$, where $V_{\perp}$ contains right singular vectors of $\mathcal{D}^t$ orthogonal to $V_k$. Here, we assuming $\lVert D \rVert_F \leq B$ and normalized input data.
  \begin{proof}
    The derivation is straightforward.
    We denote by $\mathcal H$ the hypothesis class of linear predictors
    \[
      \mathcal H
      =
      \left\{
        h_D(x)=W_0 x + D x
        \;\middle|\;
        D \in \mathbb R^{n \times d},\ \|D\|_F \le B
      \right\},
    \]
    and by $\mathcal H_k \subset \mathcal H$ the restricted class with rank-constrained parameters,
    \[
      \mathcal H_k
      =
      \left\{
        h_D(x)=W_0 x + D x
        \;\middle|\;
        \operatorname{rank}(D) \le k,\ \|D\|_F \le B
      \right\}.
    \]
    Assuming bounded inputs and an $L$-Lipschitz loss in the prediction, the loss is Lipschitz continuous over both $\mathcal H$ and $\mathcal H_k$. Since $\mathcal H_k$ is non-convex, strong convexity is not assumed globally over $\mathcal H_k$; instead, for both $\mathcal H$ and $\mathcal H_k$, strong convexity is assumed only when optimization is restricted to fixed linear subspaces of parameters (e.g., predictors of the form $D=\varepsilon V^\top$ with fixed $V$), which is sufficient for the analysis and proof of Theorem~2.

    The risk can be written as $\Frisk(h) = E_{\mathcal{S}_t}[\loss(h) ]$ where $\mathcal{S}_t$ is the input data distribution for task $t$. Let us denote $\hegn \in \mathcal{H}_k : \hegn = h(\Wegn,x) = W_0x + \Wegn x$. Similarly, $\htrue \in \mathcal{H} : \htrue = h(\Wtrue,x) = W_0x + \Wtrue x$ and $\hnew \in \mathcal{H} : \hnew = h(\Wnew,x) = W_0x + \Wnew x$. Now, we can write the difference in risks for $\hegn$ and $\htrue$ as
    \begin{align*}
      \Frisk(\hnew) - \Frisk(\htrue) = \mathbb{E}_{\mathcal{S}_t}\big[\loss(\hnew) - \loss(h^*)\big]
    \end{align*}
    By definition of strong convex loss function for some constant $\mu\geq0$,
    \begin{align*}
      \mathbb{E}_{\mathcal{S}_t}\big[\loss(\hnew) - \loss(\htrue)\big] \geq \frac{\mu}{2}\lVert \Wnew - \Wtrue\rVert^2_F
    \end{align*}
    We also know from generalization error bounds using Rademacher Complexity $\Rademacher(\cdot)$ from \citet{bartlett2003rademacher} that with probability at least $1-2\delta$,
    \begin{align*}
      |\Frisk(h) - \emprisk(h)| \leq \frac{\Rademacher(\mathcal{H})}{2} + \sqrt{\frac{\ln(1/\delta)}{2\size}}
    \end{align*}

    Now, we can write risk difference as
    \begin{align*}
      \Frisk(\hnew) - \Frisk(\htrue)  =& \emprisk(\htrue) - \Frisk(\htrue)\\
      &+ \Frisk(h) - \emprisk(\hnew)\\
      &- \emprisk(\htrue) + \emprisk(\hnew)
    \end{align*}
    Since $\hnew$ is the empirical risk minimizer by definition, $\emprisk(\hnew)\leq\emprisk(\htrue)$. Hence,
    \begin{align*}
      \Frisk(\hnew) - \Frisk(\htrue) \leq \emprisk(\htrue) - \Frisk(\htrue)\\
      + \Frisk(\hnew) - \emprisk(\hnew)
    \end{align*}

    Then we take a union bound to conclude that with probability at least $1-4\delta$,
    \begin{align*}
      \Frisk(\hnew) - \Frisk(\htrue) \leq \frac{\Rademacher(\mathcal{H})}{2} + \sqrt{\frac{2\ln(1/\delta)}{\size}}\\
      + \frac{\Rademacher(\mathcal{H})}{2}
    \end{align*}
    Hence, we can also say that with probability at least $1-4\delta$,

    \begin{align}\label{eq:totalerror1}
      \frac{\mu}{2}\lVert\Wtrue - \Wnew\rVert^2_F
      \leq \Rademacher(\mathcal{H}) &+ \sqrt{\frac{2\ln(1/\delta)}{\size}}
    \end{align}

    The Rademacher complexity of a low-rank weight matrix class $\mathcal{H}$ with rank $d$ can be directly bounded using results from \cite{bartlett2003rademacher} as
    \begin{align*}
      \Rademacher(\mathcal{H}) &= \mathcal{O}(\frac{\sqrt{d}\lVert \Wnew \lVert_F}{\sqrt{\size}})\\
      &= \mathcal{O}(\frac{\sqrt{d}}{\sqrt{\size}}) \text{\hspace*{5mm}... for bounded weight norm}
    \end{align*}
    Hence, we get \cref{equation:bound_baseline_supp} as follows:
    \begin{align*}
      \lVert\Wtrue - \Wnew\rVert^2_F
      &\leq C_1\sqrt{\frac{n_{t+1}}{\size}} + \sqrt{\frac{2\ln(1/\delta)}{\size}} + C_2
    \end{align*}
    Similarly, we can also write,
    \begin{align*}
      \frac{\mu}{2}\lVert \Wtruek - \Wegn \rVert^2_F \leq \Frisk(\hegn) - \Frisk(\htruek)
    \end{align*}
    and,
    \begin{align*}
      \Frisk(\hegn) - \Frisk(\htruek)&\leq \frac{\Rademacher(\mathcal{H}_k)}{2} + \sqrt{\frac{2\ln(1/\delta)}{\size}} \nonumber\\&+ \frac{\Rademacher(\mathcal{H}_k)}{2}\\
      &=\Rademacher(\mathcal{H}_k) + \sqrt{\frac{2\ln(1/\delta)}{\size}} \nonumber\\
      &= C_1\cdot\sqrt{\frac{k}{\size}} + \sqrt{\frac{2\ln(1/\delta)}{\size}} + C_2
    \end{align*}
    Hence, together we get,
    \begin{align*}
      \frac{\mu}{2}\lVert \Wtruek - \Wegn \rVert \leq C_1\cdot\sqrt{\frac{k}{\size}} + \sqrt{\frac{2\ln(1/\delta)}{\size}} + C_2
    \end{align*}

    Now to further prove \ref{equation:bound_ours_supp}, we use  properties of Frobenius norm,
    \begin{align*}
      \lVert \Wtrue - \Wegn \rVert^2_F &\leq 2\lVert \Wtrue - \epsilon^* \Vk \rVert^2_F\\
      &+ 2\lVert \Wegn - \epsilon^* \Vk \rVert^2_F\\
      &\leq \singularsum + 2\lVert \Wegn - \epsilon^* \Vk \rVert^2_F\\
      & \text{where $\{\sigma_i\}^d_{k+1}$ are right singular} \\
      & \text{values of $\mathcal{D}^*V_\perp$}\\
      &\leq \singularsum + C_1\cdot\sqrt{\frac{k}{\size}} \\
      &+ \sqrt{\frac{2\ln(1/\delta)}{\size}} + C_2
    \end{align*}
    We use Eckart-Young-Mirsky theorem to get the second inequality above. This gives us \cref{equation:bound_ours_supp} and concludes the proof.

  \end{proof}
\end{theorem}

\section{Experiments}
LoRA~\citep{hu2021lora} implementation for all our experiments is taken from the HuggingFace PEFT~\citep{peft} and Diffusers~\cite{von-platen-etal-2022-diffusers}
library.
\subsection{\textit{Continual} Natural Language Understanding}
The backbone model used is the RoBERTa base~\cite{roberta}.
\paragraph{Hyperparameters}  For LoRA, we use rank$=32$, and for Share, we use $K= 32, \varphi=2, p=8$. Learning rate (lr) $=4e-4$, weight decay$=0.1$, warmup ratio$=0.6$, lr scheduler is Reduce LR on Plateau and batch size $=32$.

\begin{figure}[!h]
  \centering
  \includegraphics[width=\columnwidth]{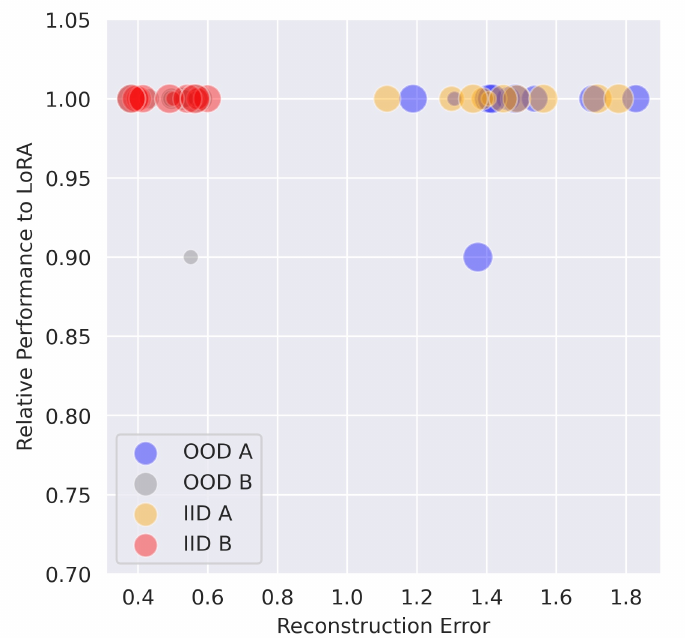}  %
  \caption{\textbf{Continual Model Merging for Hundreds of LoRAs} \textit{Share} demonstrates the capacity to continuously learn and integrate numerous LoRAs (consisting of A and B matrices) alongside the associated task data, as illustrated by this graph of performance against reconstruction error (utilizing the Frobenius Norm).}
  \vspace{-4mm}
  \label{fig:scale}
\end{figure}

\begin{table*}[!h]
  \centering
  \resizebox{\textwidth}{!}{%
    \begin{tabular}{lc|cccccccccccccccccccccccc|c}
      \hline
      & & 769 & 1448 & 362 & 615 & 1168 & 1703 & 065 & 285 & 161 & 1606 & 694 & 1294 & 1479 & 1489 & 296 & 363 & 123 & 129 & 590 & 1453 & 704 & 1400 & Avg\\
      \hline
      \textbf{Upper Bound} & &97.22 & 97.91 & 97.11 & 38.48 & 89.96 & 99.05 & 96.79 & 98.00 & 75.74 & 98.95 & 46.15 & 95.45 & 80.75 & 94.44 & 97.88 & 89.07 & 60.05 & 73.89 & 84.68 & 100 & 72.22 & 9.56 & \textbf{81.52}\\
      \hline
      \multirow{11}{*}{\textbf{Share}}
      &T-0 & 91.36 & 76.04 \\
      &T-1 & 87.55 & 77.29 & 96.17 & 30.51 \\
      &T-2 & 81.63 & 77.29 & 95.70 & 28.87 & 78.70 & 95.21 \\
      &T-3 & 80.16 & 73.89 & 95.94 & 28.55 & 77.16 & 95.08 & 92.31 & 95.07 \\
      &T-4 & 79.26 & 73.89 & 95.94 & 28.24 & 76.85 & 94.88 & 91.72 & 94.77 & 58.09 & 93.68\\
      &T-5 & 78.95 & 73.36 & 95.70 & 28.05 & 75.78 & 94.95 & 91.61 & 94.77 & 58.16 & 93.68 & 46.15 & 45.13\\
      &T-6 & 79.25 & 73.89 & 95.39 & 27.92 & 75.01 & 94.95 & 91.19 & 94.77 & 58.32 & 93.68 & 46.15 & 45.16 & 64.51 & 52.02 \\
      &T-7 & 78.33 & 73.36 & 95.54 & 27.92 & 74.86 & 94.60 & 90.66 & 94.62 & 58.01 & 89.20 & 46.15 & 45.14 & 64.51 & 51.05 & 95.89 & 65.47\\
      &T-8 & 78.38 & 73.36 & 95.54 & 27.81 & 74.55 & 94.95 & 89.62 & 94.62 & 58.01 & 89.20 & 46.15 & 45.19 & 58.01 & 50.85 & 95.89 & 64.51 & 58.08 & 47.59 \\
      &T-9 & 77.31 & 73.36 & 95.39 & 27.91 & 74.39 & 94.95 & 89.58 & 94.62 & 58.16 & 89.20 & 46.15 & 45.09 & 58.01 & 51.77 & 95.89 & 64.08 & 58.11 & 45.61 & 73.15 & 87.04 \\
      &T-10 & 77.31 & 75.14 & 95.39 & 27.64 & 74.39 & 94.95 & 88.42 & 94.62 & 58.32 & 85.81 & 46.15 & 45.09 & 60.51 & 51.77 & 95.89 & 64.38 & 58.12 & 44.77 & 73.32 & 87.04 & 77.77 & 7.64 & \textbf{67.47}\\
      \hline
    \end{tabular}%
  }
  \caption{ Continual Learning with Lots of LoRAs in a zero-shot setting. We report the Rouge-L scores for each of the 2 randomly sampled IID tasks at each time-step.}
  \label{tab:LoLA_full}
\end{table*}

\begin{table*}[!h]
  \centering
  \resizebox{\textwidth}{!}{%
    \begin{tabular}{lccccccccccc}
      \hline
      Method & 039 & 190 & 280 & 290 & 391 & 442 & 1342 & 1391 & 1598 & Avg \\
      \hline
      \textbf{LoRA (Upper Bound)} & 58.77 (1.00) & 86.61 (1.00) & 99.19 (1.00)  & 93.79 (1.00) & 93.45 (1.00) & 67.84 (1.00) & 19.57 (1.00) & 92.92 (1.00) & 51.58 (1.00) & \textbf{73.75} \\
      \textbf{TIES (non-CL)} & 35.18 (0.60) & 22.00 (0.25) & 1.00 (0.01) & 0.00 (0.00) & 78.00 (0.83) & 21.46 (0.32) & 9.93 (0.51) & 1.00 (0.01) & 21.5 (0.42) & \textbf{21.12} \\
      \textbf{Share (CL)} & 46.96 (0.80) & 45.84 (0.53) & 70.56 (0.71) & 84.54 (0.90) & 88.56 (0.95) & 49.78 (0.73) & 10.66 (0.54) & 66.34 (0.71) & 39.81 (0.71) & \textbf{55.89} \\
      \hline
    \end{tabular}%
  }
  \caption{ Continual Learning with Lots of LoRAs. We report the absolute and relative Rouge-L scores for OOD tasks.}
  \label{tab:LoLA_OOD}
\end{table*}

\subsection{\textit{Continual} Image Classification}
\paragraph{Hyperparameters}  For Share, we use $K= 10, \varphi=2, p=1$. We train using
Adam optimizer, learning rate of 0.01, batch size of 128 and 30 epochs. All input images are resized to 224 × 224.

For our baselines, we report the best results from previous works\cite{dap, zhang2024hyperadapter}. Our calculation for trainable parameters is either taken from the specific work, or calculated (as a lower bound). Due to the uncertainty in amount of memory required to save prompts and adapters, we do not report those, but it can be trivially construed that all these methods require significantly more memory than \textit{Share} to save their models.

\subsection{\textit{Continual} Text-to-Image Generation}
We use Flux~\cite{flux} text to image generation model as our base model. We utilize LoRA adapters and data found on HuggingFace community library~\cite{von-platen-etal-2022-diffusers} and merge them continually into our Share model. We follow the recommended hyperparameter recommendations for finetuning Flux models. For LoRA, $r=16$, and for Share $k=32, p=8, \varphi=2$. We provide more samples for our trained Share model. \cref{fig:supp_flux} gives more qualitative examples from our Share model.
\begin{figure*}
  \centering
  \includegraphics[width=0.8\linewidth]{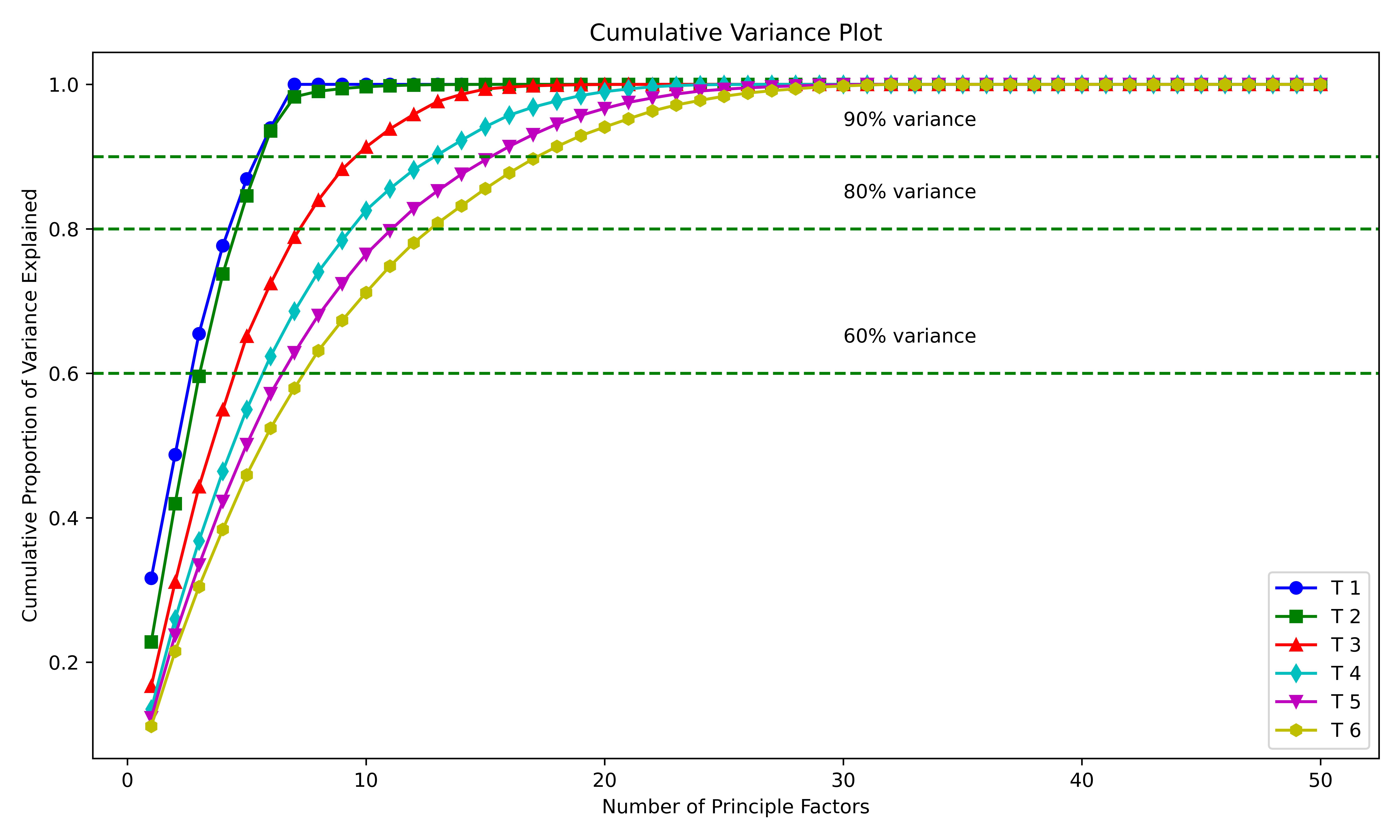}
  \caption{Explained Variance over Time.}
  \label{fig:ablate_vartime}
\end{figure*}

\subsection{Continual Asynchronous Learning and Serving of LoRAs at Scale}
We introduce a novel setup for Continual Model Merging and Learning at Scale, inspired by the continual model merging~\cite{marczak2024magmax} and LoRA scaling~\cite{sheng2023s} literature. This setup enables efficient large-scale model servicing and personalization. We used 500 publicly available LoRAs~\cite{brüelgabrielsson2024compressserveservingthousands} and conducted a continual learning experiment with 50 LoRAs arriving incrementally, updating our Share model. We evaluated Share’s performance across 9 out-of-distribution (OOD) tasks and 2 randomly sampled in-distribution (IID) tasks at each time-step. \cref{tab:LoLA_full} and \cref{tab:LoLA_OOD} show the superior performance of \textit{Share} on both IID and OOD tasks. \textit{Share} outperforms non-CL methods like TIES while nearly matching the performance of LoRA adapters while requiring upto 96x less memory. The hyperparameters are set to LoRA $r=16$, \textit{Share} $k=32$, $p=8$ and model is Mistral-7b~\cite{jiang2023mistral7b}.

\begin{figure}[!h]
  \centering
  \includegraphics[width=\columnwidth]{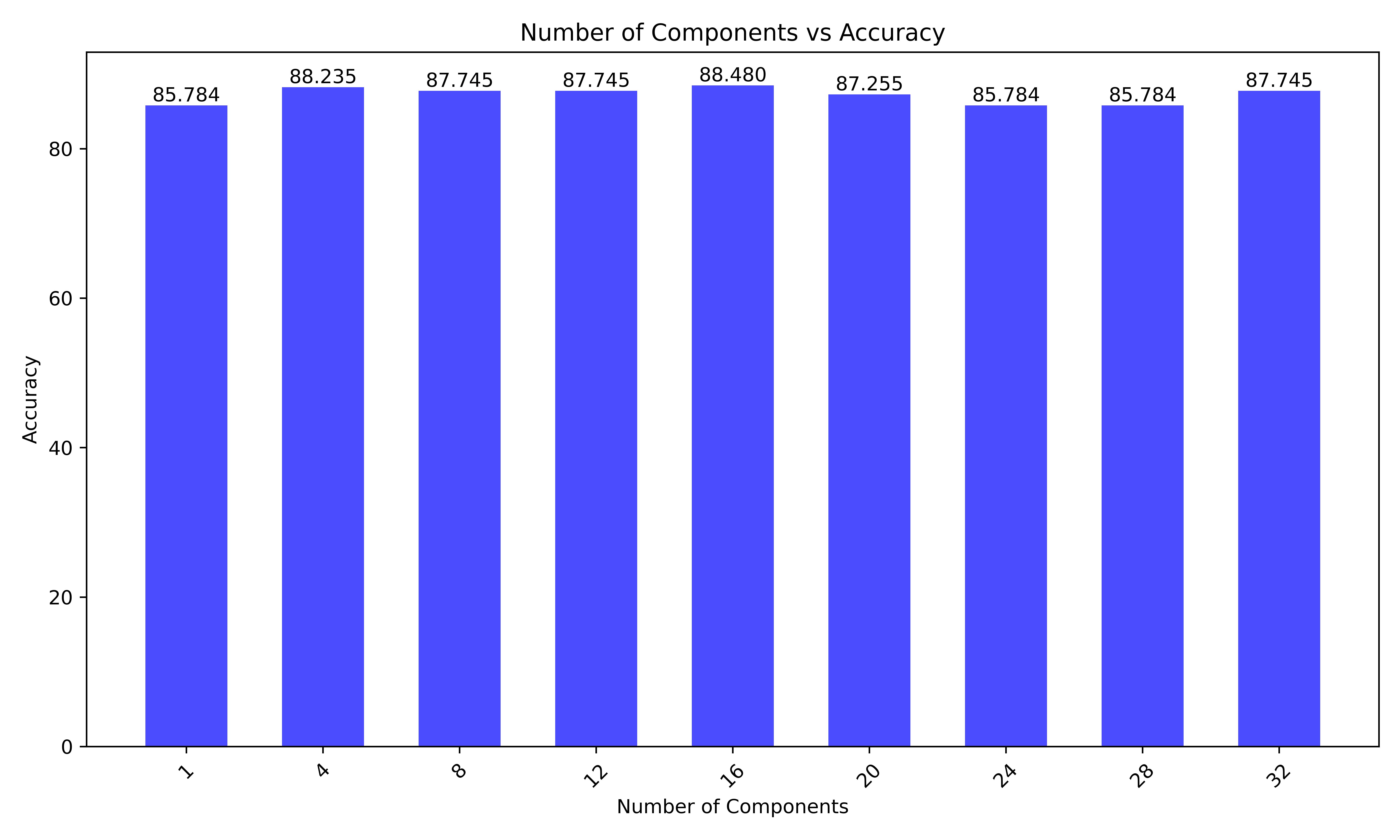}
  \caption{Performance of Share-full on MRPC with different number of principal factors/components}
  \label{fig:k_ablate}
\end{figure}

The idea is to show \textit{Share's} efficacy in being able to compress knowledge (in form of low-rank adapters and data) into a single set of principal factors and importantly, in a continual manner. This has implications in terms of a probable multi-user AI serving model which can cheaply deploy the same Share model to a large number of users while only dynamically switching lightweight coefficient parameters. Importantly, and unlike current models, users would be able to cheaply finetune the coefficients or update their \textit{Share} model and the new temporary factors and coefficients can be efficiently transferred back to the main model server. Using the \textit{Share} algorithm~\cref{algo:share}, the base model can be updated continually and asynchronously, leading to a continuously and efficiently learning AI model.

\begin{figure*}[!h]
  \centering
  \includegraphics[width=0.9\linewidth]{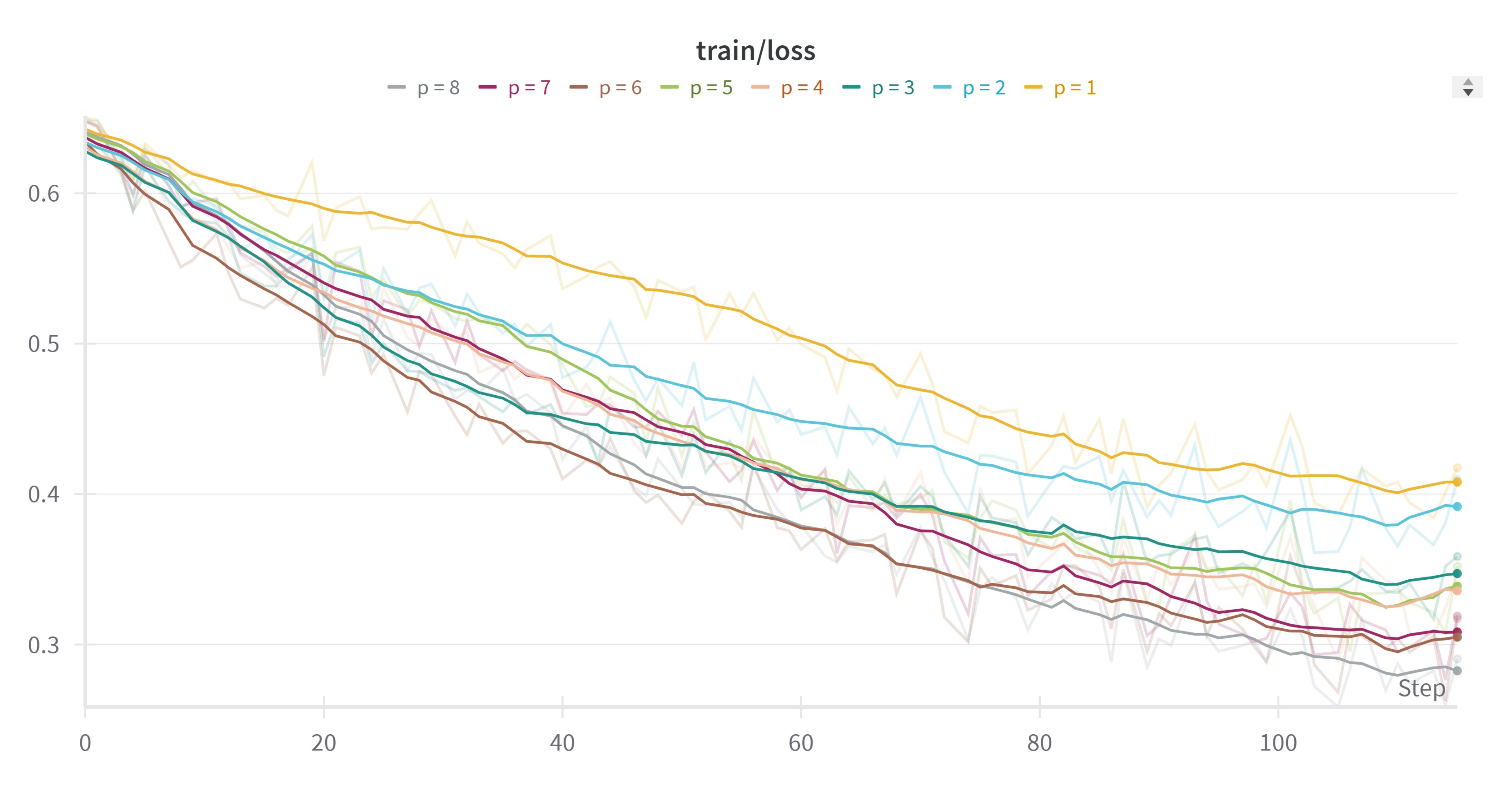}
  \caption{Ablation of Hyperparameter p (dimensionality of coefficients)}
  \label{fig:ablate_p}
\end{figure*}

\subsection{Computational Complexity}

Share's complexity scales favorably with task sequence length:
\begin{itemize}
  \item \textbf{Training:} $O(T \cdot n \cdot d \cdot p)$ vs. LoRA's $O(T \cdot r \cdot d \cdot m)$
  \item \textbf{Storage:} $O(k \cdot (d + m) + T \cdot k \cdot p)$ vs. LoRA's $O(T \cdot r \cdot (d + m))$
\end{itemize}

\subsection{Ablation}\label{sec:ablation_app}
\paragraph{How to decide $k,p, \varphi$?}

The number of Share factors $k$ is determined by a threshold based on the explained variance of the factor data matrix $D$. We find that $k$ with as low as $60\%$ explained variance is effective. Other methods~\cite{pca_bayes, gavishsl} using the eigenvalues of $D$ can also determine $k$. \cref{fig:k_ablate} shows the performance of Share-full finetuned on MRPC with a different number of factors $k$. We note that even $k=2$ works well.

In addition, \cref{fig:ablate_vartime} shows the explained variance over time for the Continual NLU (GLUE) experiment. It can be seen that as the number of continual tasks increases, we need marginally more number of principal factors to encompass most information. However, it can be seen that this number quickly converges around the value $32$.

Our empirical analysis shows that $\varphi=[1,k/4]$ is effective for identifying new factors in the foundational subspace. We show ablative results for different values of $\varphi$ for the Continual Natural Language Understanding (GLUE Benchmark) - RTE task in \cref{tab:ablation_phi}. As can be seen, $\varphi=2$ which we choose in the experiment is a good selection for this hyperparameter.

We also find that a pseudo-rank $p$ of $p=1$ is fairly effective, with higher values yielding additional performance benefits which realtively reduce with increasing $p$ value. Starting with $p=r/3$ is advisable. We ablate this hyperparameter in \cref{fig:ablate_p} for the RTE experiment in the Continual GLUE benchmark and note that $p=1$ works well with the larger improvements noted until $p=4$. The increase in the $p$ value further seems to improve the performance marginally.

\begin{table}[!h]
  \centering
  \begin{tabular}{c|c}
    \toprule
    Value of $\mathbf{\varPhi}$ & Accuracy \\
    \midrule
    $\varphi=1$ & 69.675 \\
    $\varphi=2$ & 73.290\\
    $\varphi=3$ & 74.368 \\
    $\varphi=4$ & 73.285 \\
    \bottomrule
  \end{tabular}
  \caption{Ablation for different values of $\varphi$ for Continual Adaptation part of the method. This was done for the RTE task on the Continual GLUE experiment.}
  \label{tab:ablation_phi}
\end{table}

\subsubsection{Limitations}
Our work represents one of the earliest attempts (if not the first) to enable parameter-efficient continual learning, leveraging either low-rank adapters or data. However, it relies on the assumption of using a single type of backbone model for a given task. Share cannot currently integrate multiple types of pre-trained architectures or models within a single continual learning task. When only adapters are available, the performance of Share on a specific task depends heavily on the quality of the adapter, particularly in the absence of additional data. Furthermore, Share is not yet capable of cross-task continual learning.

\section{Future Work and Broader Impact}
Future work will focus on extending Share to integrate knowledge from diverse, fully trained models. Additionally, enhancing the continual learning framework for large multi-modal models to enable cross-domain and cross-task knowledge transfer is another key direction for future research. Share has the potential for significant positive societal impact, particularly in addressing the environmental and accessibility challenges posed by large vision and language models. By enabling parameter-efficient fine-tuning, it reduces the computational resources required for training, thereby lowering energy consumption and carbon footprints. This makes advanced AI research more accessible to a broader range of researchers, including those with limited access to large-scale computational infrastructure.

Furthermore, Share's ability to support continual learning allows massive datasets and complex tasks to be decomposed into smaller, manageable sub-tasks, enabling federated and distributed learning approaches. This opens the door to more collaborative and decentralized AI development. The memory efficiency and adaptability of Share also pave the way for building lifelong learning systems capable of continuously evolving across diverse tasks and domains. These innovations could lead to more sustainable, inclusive, and scalable AI systems for the future.
\begin{figure*}
  \centering
  \includegraphics[width=\linewidth]{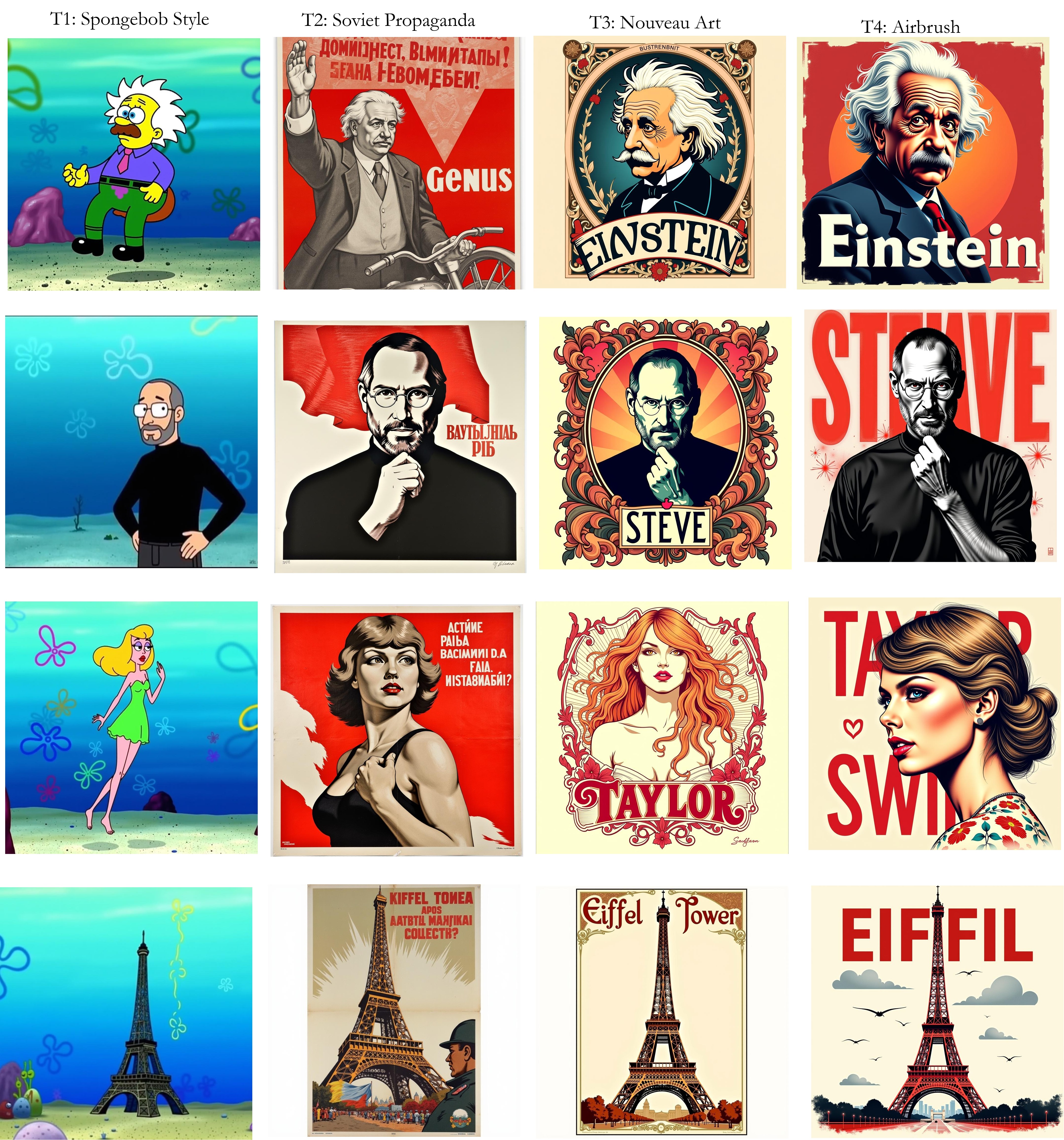}
  \caption{More examples from our text-to-image generation continual experiment using Share}
  \label{fig:supp_flux}
\end{figure*}
\FloatBarrier

\end{document}